\documentclass[nohyperref]{article}

\PassOptionsToPackage{numbers, sort&compress}{natbib}

\usepackage[final]{neurips_2023}




\usepackage[utf8]{inputenc} 
\usepackage[T1]{fontenc}    
\usepackage[hidelinks]{hyperref}       
\usepackage{url}            
\usepackage{booktabs}       
\usepackage{amsfonts}       
\usepackage{nicefrac}       
\usepackage{microtype}      
\usepackage{xcolor}         

\hypersetup{pdftitle={Necessary and Sufficient Conditions for Optimal Decision Trees using Dynamic Programming
},
            pdfkeywords={Optimal Decision Trees, Dynamic Programming},
            pdflang={en-US}
            }
\usepackage{amsmath}
\usepackage{amssymb}
\usepackage{mathtools}
\usepackage{amsthm}

\usepackage[capitalize,noabbrev]{cleveref}

\theoremstyle{plain}
\newtheorem{theorem}{Theorem}[section]
\newtheorem{proposition}[theorem]{Proposition}

\theoremstyle{definition}
\newtheorem{definition}[theorem]{Definition}

\theoremstyle{remark}

\DeclareRobustCommand{\VAN}[3]{#2} 

\newcommand{\dtc}{STreeD}

\DeclareMathAlphabet{\mathbbold}{U}{bbold}{m}{n}
\newcommand*{\boldone}{\mathbbold{1}}

\newcommand{\overbar}[1]{\mkern 4.5mu\overline{\mkern-4.5mu#1\mkern-4.5mu}\mkern 4.5mu}

\usepackage{enumitem}
\usepackage{subcaption}

\newcommand{\opt}[1]{\operatorname{opt}\left(#1\right)}
\newcommand{\merge}[1]{\operatorname{merge}(#1)}
\newcommand{\feas}[1]{\operatorname{feas}(#1)}
\usepackage[ruled, noend, noline]{algorithm2e}
\SetKw{Continue}{continue}


\bibliographystyle{plainnat}

\title{Necessary and Sufficient Conditions for Optimal Decision Trees using Dynamic Programming}

%

\author{%
  Jacobus~G.~M.~van~der~Linden \qquad
  Mathijs M. de Weerdt \qquad
  Emir Demirovi\'{c} \\
  Delft University of Technology, Department of Computer Science\\
  \texttt{\{J.G.M.vanderLinden, M.M.deWeerdt, E.Demirovic\}@tudelft.nl} 
}

\begin{document}

\maketitle

\begin{abstract}
Global optimization of decision trees has shown to be promising in terms of accuracy, size, and consequently human comprehensibility. 
However, many of the methods used rely on general-purpose solvers for which scalability remains an issue.
Dynamic programming methods have been shown to scale much better because they exploit the tree structure by solving subtrees as independent subproblems. However, this only works when an objective can be optimized separately for subtrees.
We explore this relationship in detail and show the necessary and sufficient conditions for such separability and generalize previous dynamic programming approaches into a framework that can optimize any combination of separable objectives and constraints.
Experiments on five application domains show the general applicability of this framework, while outperforming the scalability of general-purpose solvers by a large margin.
\end{abstract}

\section{Introduction}
Many high-stake domains, such as medical diagnosis, parole decisions, housing appointments, and hiring procedures, require human-comprehensible machine learning (ML) models to ensure transparency, safety, and reliability \citep{rudin2019stop_black_box}.
Decision trees offer such human comprehensibility, provided the trees are small \citep{piltaver2016comprehensible}.
This motivates the search for \emph{optimal} decision trees, i.e., trees that globally optimize an objective for a given maximum size.

The aforementioned application domains wildly differ from one another, and therefore the objective and constraints also differ, leading to a variety of tasks: e.g., cost-sensitive classification for medical diagnosis~\citep{turney1994cost_sensitive, freitas2007sensitive_medical}; prescriptive policy generation for revenue maximization \citep{biggs2021revenue_optimization}; or ensuring fairness constraints are met \citep{aghaei2019nondiscriminative}.
For broad-scale application of optimal decision trees, we need methods that can \emph{generalize} to incorporate objectives and constraints such as the ones from these application domains, while remaining \emph{scalable} to real-world problem sizes.

However, many state-of-the-art methods for optimal decision trees lack the required scalability, for example, to optimize datasets with more than several thousands of instances or to find optimal trees beyond depth three.
This includes approaches such as mixed-integer programming~(MIP)~\citep{bertsimas2017optimal,verwer2019learning}, constraint programming~(CP)~\citep{verhaeghe2020cp_odt}, boolean satisfiability~(SAT)~\citep{janota2020dt_sat_expl_paths} or maximum satisfiability~(MaxSAT)~\citep{hu2020maxsat_odt}.

Dynamic programming (DP) approaches show scalability that is orders of magnitude better \citep{aglin2020learning, lin2020generalized_sparse, demirovic2022murtree} by directly exploiting the tree structure. Each problem is solved by a recursive step that involves two subproblems, each just half the size of the original problem. Additional techniques such as bounding and caching are used to further enhance performance.

However, unlike general-purpose solvers, such as MIP, DP cannot trivially adapt to the variety of objectives and constraints mentioned before.
For DP to work efficiently, it must be possible to solve the optimization task at hand \emph{separately} for every subtree.

Therefore, the main research question considered here is to what extent can this separability property be generalized? 
In answering this question we provide a generic framework called \emph{\dtc} (Separable Trees with Dynamic programming). 
We push the limits of DP for optimal decision trees by providing conditions for separability that are both necessary and sufficient.
These conditions are less strict and extend to a larger class of optimization tasks than those of state-of-the-art DP frameworks \citep{nijssen2010dl8_constraints, lin2020generalized_sparse}. We also show that any combination of separable optimization tasks is also separable.
We thus attain generalizability similar to general-purpose solvers, while preserving the scalability of DP methods.

In our experiments, we demonstrate the flexibility of \dtc{} on a variety of optimization tasks, including cost-sensitive classification, prescriptive policy generation, nonlinear classification metrics, and group fairness.

 As is commonly done, we assume that features are binarized beforehand. Therefore, \dtc{} returns optimal \emph{binary} decision trees under the assumption of a given binarization.

In summary, our main contributions are 1) a generalized DP framework (\dtc) for optimal decision trees that can optimize any \emph{separable} objective or constraint; 2) a proof for necessary and sufficient conditions for separability; and 3) extensive experiments on five application domains that show the flexibility of \dtc, while performing on par or better than the state of the art.

The following sections introduce related work and preliminaries. We then further define separability, show conditions for separability, and provide a framework that can find optimal decision trees for any separable optimization task. Finally, we provide five diverse example applications and test the performance of the new framework on these applications versus the state-of-the-art.

\section{Related Work}

\paragraph{Decision tree learning}
Because optimal decision tree search is an NP-Hard problem~\citep{laurent1976constructing}, most traditional solution methods for decision tree learning were heuristics.
These heuristics, such as CART~\citep{breiman1984cart} and C4.5~\citep{quinlan1993c4.5}, often employ a top-down compilation, splitting the tree based on a local information gain or entropy metric. However, for each new objective, new splitting criteria need to be devised, and often this splitting criterion only indirectly relates to the real objective.

With increasing computational power, optimal decision tree search for limited depth has become feasible. \citet{bertsimas2017optimal} and \citet{verwer2017flexible} showed that optimal trees better capture the information of the dataset than heuristics by directly optimizing the objective at hand, resulting on average in a 1-5\% gain in out-of-sample accuracy for trees of similar or smaller size.

However, their MIP models --and also the state-of-the-art MIP models~\citep{verwer2019learning, zhu2020scalable,aghaei2021strong, hua2022odt_mip_branching}-- may often take several hours for medium-sized datasets, even when considering small trees up to depth three.
Others have proposed to use SAT \citep{narodytska2018sat_odt, janota2020dt_sat_expl_paths,avellaneda2020efficient_sat_odt}, MaxSAT \citep{hu2020maxsat_odt, shati2023sat_odt_nonbin}, and CP \citep{verhaeghe2020cp_odt}. 
These methods also lack scalability \citep{demirovic2022murtree}, or --in the case of SAT-- solve a different problem: finding a minimal tree with no misclassifications.
Because scalability is such an issue for these methods, many propose to solve the problem with variants of local search, such as coordinate descent \citep{dunn2018thesis,bertsimas2019book_ml_opt_lens}, tree alternating optimization~\citep{carreira_perpinan2018tao_dt_cd} and a heuristic SAT approach \citep{schidler2021approx_sat}.

In contrast, DP approaches find globally optimal trees while achieving orders of magnitude better scalability \citep{aglin2020learning}, so the survey by \citet{costa2023dt_survey} concludes that specialized DP methods are the most promising future direction for optimal decision trees.
\citet{nijssen2007mining} proposed DL8, the first dynamic programming approach for optimal decision trees, which they later generalized to a broader set of objectives and constraints \citep{nijssen2010dl8_constraints}. \citet{aglin2020learning, aglin2020pydl8} improved the scalability of DL8 by adding branch and bound and new caching techniques to the updated algorithm DL8.5. \citet{hu2019sparse} and \citet{lin2020generalized_sparse} added sparsity-based pruning and new lower bound techniques. \citet{demirovic2022murtree} improved scalability by adding a special solver for trees of depth two, a similarity lower bound, and other techniques. Recent developments include an anytime algorithm \citep{nijssen2022dl8.5_anytime, demirovic2023anytime_blossom} and learning under memory constraints \citep{nijssen2010dl8_constraints}.

\paragraph{Objectives and constraints}
Decision tree learning can be extended by considering a variety of constraints. \citet{nanfack2022dt_constraints_survey} categorized these constraints into structure, attribute, and instance constraints.
Structure constraints limit the structure of the tree, such as its depth or size.
Attribute constraints impose limits based on the attribute values. Examples are monotonicity constraints~\citep{hu2011monotonic_rank_entropy}, cost-sensitive classification~\citep{lomax2013survey_costsensitive} and group fairness constraints~\citep{jo2022fair_mip}.
Instance constraints operate on pairs of instances, such as robustness constraints~\citep{vos2022robust_odt}, individual fairness constraints~\citep{aghaei2019nondiscriminative}, and must- or cannot-link constraints in clustering~\citep{struyf2007clustering_link_constraints}.
Among these, structure constraints are most widely included in decision tree learning methods. 

Many of these constraints or objectives can be added to MIP models.
Therefore, the literature shows a multitude of MIP models for a variety of applications, such as fairness \citep{aghaei2019nondiscriminative}, algorithm selection \citep{manual_boas2021odt_algorithm_selection}, and robustness \citep{vos2022robust_odt}. However, scalability remains an issue.

\citet{nijssen2010dl8_constraints} show how the DP-based DL8 algorithm~\citep{nijssen2007mining} can be generalized for a variety of objectives, provided that the objectives are \emph{additive}, i.e., 
the value of an objective in a tree node is equal to the sum of the value in its children.
Constraints are required to be \emph{anti-monotonic}, i.e., a constraint should always be violated for a tree whenever it is violated for any of its subtrees.
They consider constraints and objectives such as minimum support in a leaf node, C4.5's estimated error rate, Bayesian probability estimation, cost-sensitive classification, and privacy preservation.

Since then, new DP approaches have improved on the runtime performance of DL8.
This, however, reduced the generalizability of those methods. DL8.5~\citep{aglin2020learning,aglin2020pydl8}, for example, can optimize any additive objective but does not support constraints. MurTree~\citep{demirovic2022murtree} provides new algorithmic techniques that result in much better scalability but only considers accuracy.
GOSDT~\citep{lin2020generalized_sparse} can optimize additive objectives that are linear in the number of false positives and negatives but also does not support constraints. When the objective is nonlinear, they no longer use DP. 
\citet{demirovic2021nonlinear} show how DP can be used to optimize even nonlinear functions, provided the objective is monotonically increasing in terms of the false positives and negatives, but they also do not consider constraints.


\paragraph{Summary}
Decision tree learning is often approached through top-down induction heuristics, which depend on developing complex custom splitting criteria for each new learning task and may perform suboptimally in terms of classification accuracy.
Optimal methods often outperform heuristics in accuracy but typically lack scalability. Optimal DP approaches have better scalability but lack generalizability to other objectives and constraints. In the next sections, we address this by pushing the limits of what can be solved by a general yet efficient DP framework for optimal decision trees.

\section{Preliminaries}
This section introduces notation, defines the problem, and explains how dynamic programming solutions for optimal decision trees work.

\paragraph{Notation and problem definition} Let $\mathcal{F}$ be a set of features and let $\mathcal{K}$ be a set of labels that are used to describe an instance.
Let $\mathcal{D}$ be a dataset consisting of instances $(x,k)$, with $x \in \{0,1\}^{|\mathcal{F}|}$ the feature vector and $k\in\mathcal{K}$ the label of an instance.
Because the features are binary, we introduce the notation $f$ and $\bar{f}$ for every feature $f\in\mathcal{F}$ to denote whether an instance satisfies feature $f$ or not. In the same way, let $\mathcal{D}_{f}$ describe the set of instances in $\mathcal{D}$ that satisfy feature $f$, and $\mathcal{D}_{\bar{f}}$ the set of instances that does not. 

Let $\tau = (B, L, b, l)$ be a binary tree with $B$ the internal branching nodes, $L$ the leaf nodes, $b: B \rightarrow \mathcal{F}$ the assignment of features to branching nodes and $l: L \rightarrow \mathcal{K}$ the assignment of labels to leaf nodes.
The left and right child nodes of a branching node $u\in B$ are given by $u_L$ and $u_R$, respectively.
Instances that satisfy a feature branching test are sent to the right subtree, and the rest to the left.

Then, for example, when minimizing misclassification score (the number of misclassified instances), the cost $\operatorname{C}$ of a decision tree can be computed by the recursive formulation:
\begin{equation}
\label{eq:calc_cost_s3}
    \operatorname{C}(\mathcal{D}, u) = 
    \begin{cases}
        \sum_{(x,k)\in\mathcal{D}} \boldone (k \neq l(u)) & \text{~if~} u \in L \\
        \operatorname{C}(\mathcal{D}_{\overbar{b(u)}}, u_L) ~+
        \operatorname{C}(\mathcal{D}_{b(u)}, u_R)
        & \text{~if~} u \in B
    \end{cases}
\end{equation}

The task is to find an optimal decision tree $\tau$ that minimizes the objective value over the training data given a maximum tree depth $d$. 
We now explain how this can be optimized with DP, and in the next section, we generalize this to any separable decision-tree optimization task.

\paragraph{Dynamic programming}
DP simplifies a complex problem by reducing it to smaller repeated subproblems for which solutions are cached.
For example, minimizing misclassification score with a binary class can be solved with DP as follows (adapted from~\citep{demirovic2022murtree}):
\begin{equation}
    T(\mathcal{D},d) = 
    \begin{cases}
    \textstyle{\min} \{
    |\mathcal{D}^+|, |\mathcal{D}^-|\} & d=0 \\
    \textstyle{\min_{f\in\mathcal{F}}} 
    \left\{ \right.
          T(\mathcal{D}_f, d-1) ~+~  
          \left. T(\mathcal{D}_{\bar{f}}, d-1) \right\}
     & d>0
    \end{cases}
    \label{eq:dp_acc}
\end{equation}
This equation computes the minimum possible misclassification score for a dataset $\mathcal{D}$ and a given maximum tree depth $d$.
Recursively, as long as $d > 0$, a feature $f$ is selected for branching such that the sum of the misclassification score of the left and right subtree is minimal.
In the leaf node (when~$d=0$), the label of the majority class is selected: either the positive ($\mathcal{D}^+$) or negative ($\mathcal{D}^-$) label.
This approach is further optimized by using caching and bounds.

As is common with DP notation, Eq.~\eqref{eq:dp_acc} returns the solution value (or cost) and not the solution (the tree).
In the remainder of the text, we refer to solution values and solutions interchangeably, with one implying the other and vice versa.

While Eq.~\eqref{eq:dp_acc} yields a single optimal solution, \citet{demirovic2021nonlinear} investigated the use of DP for nonlinear bi-objective optimization by searching for a Pareto front of optimal solutions: i.e., the set of solutions that are not Pareto dominated ($\succ$) by any other solution:
\begin{equation}
    \operatorname{nondom}(\Theta) = 
    \left\{ v \in \Theta ~|~ \neg\exists ~v' \in \Theta ~(v' \succ v \right )\}
\end{equation}
Consequently, every subtree search also no longer returns just one solution but a Pareto front. 
For this, they introduce a $\operatorname{merge}$ function that combines every solution from one subtree with every solution from the other subtree, resulting in a new Pareto front.

\section{Framework for separable objectives}
\label{sec:method}
In this section, we generalize previous DP methods for optimal decision trees to a new general framework that can solve any separable optimization task.
First, we define the problem.
Second, we formalize what is meant by separability in the context of learning decision trees and then we prove necessary and sufficient conditions for optimization tasks to be separable.
Third, we present the generalized framework.
Finally, we show separable optimization task formulations for four example domains.

\subsection{Problem definition and notation}
We now generalize the previously introduced problem of minimizing misclassification score to any decision tree optimization task by using common DP notation. Given a state space $\mathcal{S}$ and a solution space $\mathcal{V}$, we define:

\begin{definition}[Optimization task]
    \label{def:objective}
    An \emph{optimization task} is described by six components:
    \begin{enumerate}[topsep=0pt,itemsep=-1ex,partopsep=1ex,parsep=1ex]
        \item a cost function $g :~\mathcal{S} \times (\mathcal{F} \cup \mathcal{K}) \rightarrow \mathcal{V}$ that returns the cost of action $a \in \mathcal{F} \cup \mathcal{K}$ in state $s \in \mathcal{S}$, where the action is either assigning label $\hat{k}\in\mathcal{K}$ or branching on feature $f\in\mathcal{F}$;
        \item a transition function $t :~\mathcal{S} \times \mathcal{F} \times \{ 0, 1 \} \rightarrow \mathcal{S}$ that provides the next state after branching left or right on feature $f$, denoted by $f$ or $\bar{f} \in \mathcal{F} \times \{0, 1 \}$;
        \item a comparison operator $\succ~:~\mathcal{V} \times \mathcal{V} \rightarrow \{0, 1\}$ that determines Pareto dominance;
        \item a combining operator $\oplus~:~\mathcal{V} \times \mathcal{V} \rightarrow \mathcal{V}$ that combines solution values to one value;
        \item a constraint $c :~\mathcal{V} \times \mathcal{S} \rightarrow \{0, 1\}$ that determines feasibility of a given solution $v$ and state $s$;
        \item and an initial state $s_0 \in \mathcal{S}$.
    \end{enumerate}         
\end{definition}

For all optimization tasks in this paper, the state $s$ can be described by the dataset $\mathcal{D}$ and the branching decisions $F$ in parent nodes.
The transition function is given by $t(\langle \mathcal{D}, F \rangle, f) = \langle \mathcal{D}_f, F \cup \{f\} \rangle$, and the initial state $s_0$ is $\langle \mathcal{D}, \emptyset \rangle$. But our framework extends beyond this as well.

When minimizing misclassifications, the cost function is $g(\langle \mathcal{D}, F \rangle, \hat{k}) = |\{(x,k)~\in~\mathcal{D}~|~k\neq\hat{k}\}|$; the comparison operator $\succ$ is $<$; the combining operator $\oplus$ is addition and the constraint $c$ returns all solutions as feasible. 
Similarly, more complex objectives, such as F1-score and group fairness, which do not fit the conditions of DL8~\citep{nijssen2010dl8_constraints}, can be defined by using these building blocks. We show four examples in Section~\ref{sec:obj_examples}.

The final cost of a tree $\tau=(B, L, b, l)$ with root node $r$ is now given by calling $\operatorname{C}(s_0, r)$ on the following recursive function, which generalizes Eq.~\eqref{eq:calc_cost_s3}:
\begin{equation}
\label{eq:calc_cost_s4}
    \operatorname{C}(s, u) = 
    \begin{cases}
        g(s, l(u)) & \text{~if~} u \in L \\
        \operatorname{C}(t(s, \overbar{b(u)}), u_L) ~\oplus~
        \operatorname{C}(t(s, b(u)), u_R) ~\oplus~ g(s, b(u))
        & \text{~if~} u \in B
    \end{cases}
\end{equation}

In leaf nodes, the cost of a tree is given by cost function $g$. In branching nodes, the combining operator $\oplus$ combines the cost of the left and right subtree and the branching costs.

Given an optimization task $o = \langle g, {t}, {\succ}, {\oplus}, c, s_0 \rangle$ and a maximum tree depth $d$, the aim is to find a tree (or a Pareto front of trees) that satisfies the constraint $c$ and optimizes (according to the comparison operator $\succ$) the cost $\operatorname{C}$. 

With this definition of an optimization task, we can generalize several of the concepts introduced in the preliminaries. Let $\Theta$ describe a set of possible solutions and let
\begin{equation}
\label{eq:feas}
 \feas{\Theta, s} = \{v \in \Theta ~|~ c(v, s)=1 \}   
\end{equation}
be the subset of all feasible solutions in $\Theta$ for state $s$. 
The optimal set of solutions is given by 
\begin{equation}
    \opt{\Theta, s} = \operatorname{nondom}(\feas{\Theta, s}),
\end{equation}
the Pareto front of feasible solutions in $\Theta$, with $\operatorname{nondom}$ depending on the comparison operator $\succ$.
Furthermore, we generalize the definition for $\operatorname{merge}$ from \citep{demirovic2021nonlinear}:
\begin{equation}
    \label{eq:merge}
    \operatorname{merge}(\Theta_1, \Theta_2, s, f) = \left\{ v_1 \oplus v_2 \oplus g(s, f) ~|~ 
    v_1 \in \Theta_1, v_2 \in \Theta_2 \right\}
\end{equation}%
This returns all combinations of solutions from sets $\Theta_1$ and $\Theta_2$ by using the combining operator $\oplus$.

\subsection{Separability}
\begin{figure}[t!]
    \centering
    \includegraphics{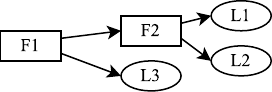}
    \caption{An example tree of depth two with five decision variables: two branching decisions F1 and F2, and three leaf node label assignments: L1, L2, and L3.}
    \label{fig:example_tree}
\end{figure}
An optimization task is separable if optimal solutions to subtrees can be computed independently of other subtrees. For example, in the tree in Fig.~\ref{fig:example_tree}, the optimal label L3 should be independent of the branching decision F2 and labels L1 and L2. Label L2 should be independent of L1 and L3.

\begin{definition}[Separable]
\label{def:separable}
    An optimization task is \emph{separable} if and only if the optimal solution to any subtree can be determined independently of any of the decision variables that are not part of that subtree or the parent nodes' branching decisions.    
\end{definition}

The general idea to prove that an optimization task satisfies Def.~\ref{def:separable} is to use complete induction over the maximum tree depth $d$. For the base step ($d=0$) it is sufficient to require the cost and transition functions to be \emph{Markovian}: their output should depend only on the current state and the chosen decision. For the induction step ($d>0$), we need to show that the optimization task satisfies the \emph{principle of optimality} \citep{bellman1957dp}: that optimal solutions for trees of depth $d$ can be constructed from only the optimal solutions to its subtrees and the new decision (see Def.~\ref{def:principle_of_optimality}). In summary:
\begin{proposition}
\label{prop:seperable}
    An optimization task $o=\langle g, t, \succ, \oplus, c, s_0 \rangle$ is \emph{separable} if and only if its cost function~$g$ and transition function $t$ are \emph{Markovian} and the task $o$ satisfies the \emph{principle of optimality}.
\end{proposition}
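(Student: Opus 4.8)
The plan is to establish both directions of the biconditional, using the sketch given before the statement as the skeleton. A preliminary observation makes everything go through cleanly: by the form of the transition function, the state of any node is completely determined by the dataset together with the branching decisions of its ancestors, since $t(\langle \mathcal{D}, F\rangle, f) = \langle \mathcal{D}_f, F\cup\{f\}\rangle$. Hence the clause ``or the parent nodes' branching decisions'' in Def.~\ref{def:separable} is exactly the statement that the optimal subtree solution may depend on the node's state but on nothing else external to the subtree.

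For sufficiency (Markovian and principle of optimality $\Rightarrow$ separable) I would run complete induction on the depth $d$ of the subtree. In the base case $d=0$ the subtree is a single leaf whose cost is $g(s,\hat k)$; because $g$ is Markovian this value depends only on the state $s$ and the label $\hat k$, and by the preliminary observation $s$ itself depends only on the data and the ancestors' branching decisions. The optimal label, the minimizer under $\succ$ of $g(s,\hat k)$ over $\hat k \in \mathcal{K}$, is therefore independent of every decision variable outside the leaf, which is Def.~\ref{def:separable} at depth $0$. For the inductive step, assume separability for all subtrees of depth below $d$. A depth-$d$ subtree in state $s$ either is a leaf (done) or branches on some $f$, yielding child states $t(s,\bar f)$ and $t(s,f)$ that depend only on $s$ and $f$ because $t$ is Markovian. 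The induction hypothesis makes the optimal solutions of both children determinable independently, and the principle of optimality guarantees that an optimal solution of the whole subtree is obtained by combining these via $\oplus$ and the Markovian branching cost $g(s,f)$ --- precisely the operation computed by $\operatorname{merge}$ in Eq.~\eqref{eq:merge} --- and then minimizing over the finitely many $f$. The result still depends only on $s$, closing the induction.

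For necessity (separable $\Rightarrow$ Markovian and principle of optimality) I would argue the contrapositive for each condition. If $g$ (or $t$) failed to be Markovian, there would be two external contexts agreeing on a node's state and candidate action yet producing different costs (or different child states); fixing the node and varying the context could then flip the locally optimal choice, so the optimal subtree solution would not be determinable from the state alone, contradicting separability. If instead the principle of optimality failed, some tree would admit no optimal solution assembled purely from optimal subtree solutions, meaning a suboptimal choice in one subtree is forced by a sibling or ancestor decision variable --- again exhibiting the forbidden dependence. Together the two directions give the equivalence.

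I expect the necessity direction, and in particular the principle-of-optimality half, to be the main obstacle. Sufficiency is a routine structural induction once Markovianity guarantees well-defined child states. The delicate point in necessity is to translate the qualitative statement ``the optimal subtree solution can be computed independently'' into the exact failure mode excluded by Def.~\ref{def:principle_of_optimality}, and to argue carefully that a single witnessing external configuration exhibiting dependence already suffices to break separability, while the influence legitimately permitted through the ancestors' branching decisions is fully absorbed into the node's state.
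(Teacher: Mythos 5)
Your proposal is correct and follows essentially the same route as the paper's own proof: complete induction over the tree depth for sufficiency (Markovian leaves at $d=0$, then the principle of optimality plus $\operatorname{merge}$ and $\operatorname{opt}$ over branching features for $d>0$), and contradiction arguments for necessity of each condition, with your preliminary observation about the state absorbing ancestor decisions mirroring the paper's remark that the state is a function of $s_0$ and the parents' branching decisions. No genuine differences in decomposition or key ideas.
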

A full proof for this and all subsequent propositions and theorems can be found in Appendix~\ref{app:proofs}.




To satisfy the \emph{principle of optimality}, the combining operator $\oplus$ must \emph{preserve order} over $\succ$, and the constraint $c$ must be \emph{anti-monotonic}. These two notions are explained next.

We introduce the new notion \emph{order preservation} that guarantees that any combination of optimal solutions using the combining operator $\oplus$ always dominates a combination with at least one suboptimal solution. Many objectives, such as costs, are \emph{additive} with $<$ as the $\succ$ operator. Addition preserves order over $<$. However, \emph{addivity} is not a necessary condition. We present \emph{order preservation} instead as a necessary condition (see Appendix~\ref{app:proof_condition_separability}).

\begin{definition}[Order preservation]
\label{def:order_preservation}
A combining operator~$\oplus$ \emph{preserves order} over a given comparator~$\succ$, if for any given state $s$, feature $f$ and solution sets $\Theta_1$ and $\Theta_2$, with $s_1=t(s, f)$, $s_2=t(s, \bar{f})$, $v_1 \in \opt{\Theta_1, s_1}, v_1' \in \Theta_1$ and $v_2 \in \opt{\Theta_2, s_2}$, with $v_1 \succ v_1'$, then $v_1 \oplus v_2 \succ v_1' \oplus v_2$ (and  $v_2 \oplus v_1 \succ v_2 \oplus v_1'$, if $\oplus$ is not commutative).
\end{definition}

To guarantee that applying $\operatorname{feas}$ as defined in Eq.~\eqref{eq:feas} does not prune any partial solution that is part of the final optimal solution, constraint $c$ must be \emph{anti-monotonic}~\citep{nijssen2010dl8_constraints}: if a constraint is violated in a tree, it is also violated in any tree of which this tree is a subtree. However, in order to speak of a \emph{necessary} condition, we redefine anti-monotonicity to require that any solution which is constructed from at least one subsolution that is infeasible, cannot be an optimal solution.

\begin{definition}[Anti-monotonic]
\label{def:anti_monotonic}
A constraint $c$ is \emph{anti-monotonic} if for any state $s$, feature $f$ and solution sets $\Theta_1$ and $\Theta_2$, with $s_1=t(s,f)$, $s_2=t(s,\bar{f})$, with $v_1 \in \Theta_1$ and $v_2 \in \Theta_2$, if $\neg c(v_1, s_1)$ or if $\neg c(v_2, s_2)$, then $v_1 \oplus v_2 \notin \opt{\Theta, s}$.
\end{definition}

Minimum support (minimum leaf node size) is an example of an anti-monotonic constraint \citep{nijssen2010dl8_constraints}. 

With these conditions and definitions in place, we have the necessary and sufficient conditions for separability and can present the main theoretical result of this paper:
\begin{theorem}
\label{th:sep}
    An optimization task $o = \langle g, t, \succ, \oplus, c, s_0\rangle$ is \emph{separable} if and only if its cost function~$g$ and transition function $t$ are \emph{Markovian}, its combining operator $\oplus$ \emph{preserves order} over its comparison operator $\succ$ and the constraint $c$ is \emph{anti-monotonic}.
\end{theorem}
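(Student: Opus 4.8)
The plan is to leverage Proposition~\ref{prop:seperable}, which already reduces separability to two requirements: that $g$ and $t$ are \emph{Markovian}, and that the task satisfies the \emph{principle of optimality}. Since the Markovian requirement appears verbatim in both statements and the induction on depth $d$ is already carried out in that proposition, it suffices to establish the following equivalence under the standing assumption that $g$ and $t$ are Markovian: the task satisfies the principle of optimality if and only if $\oplus$ \emph{preserves order} over $\succ$ and $c$ is \emph{anti-monotonic}. Everything reduces to a single branching step, because the principle of optimality relates $\opt{\Theta, s}$ for $\Theta = \merge{\Theta_1, \Theta_2, s, f}$ to the optimal subtree sets $\opt{\Theta_1, s_1}$ and $\opt{\Theta_2, s_2}$, with $s_1 = t(s,f)$ and $s_2 = t(s,\bar{f})$; the full statement then follows by taking the union over all features $f$.

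For the sufficiency direction I would take an arbitrary element $v = v_1 \oplus v_2 \oplus g(s,f)$ of $\opt{\Theta, s}$ and show that both $v_1$ and $v_2$ must themselves be optimal in their subtrees, so that every member of the optimal final set is already produced by merging only optimal subtree solutions. Feasibility of $v_1$ and $v_2$ follows directly from the contrapositive of anti-monotonicity (Def.~\ref{def:anti_monotonic}): if either partial solution were infeasible, then $v$ could not lie in $\opt{\Theta, s}$. Non-dominance follows from order preservation (Def.~\ref{def:order_preservation}): were $v_1$ dominated by some $v_1^\star \in \opt{\Theta_1, s_1}$, then $v_1^\star \oplus v_2 \oplus g(s,f) \succ v$, contradicting $v \in \opt{\Theta, s}$; the symmetric argument handles $v_2$, invoking the non-commutative clause of Def.~\ref{def:order_preservation} when needed. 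Hence $\opt{\Theta, s} = \opt{\merge{\opt{\Theta_1, s_1}, \opt{\Theta_2, s_2}, s, f}, s}$, which is precisely the principle of optimality.

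For the necessity direction I would argue by contraposition: assuming the principle of optimality, I show that a failure of either condition produces an optimal final solution that cannot be reconstructed from optimal subtree solutions. If order preservation fails, there exist $\Theta_1, \Theta_2, s, f$ and $v_1 \in \opt{\Theta_1,s_1}$, $v_1' \in \Theta_1$ with $v_1 \succ v_1'$ but $v_1' \oplus v_2 \oplus g(s,f) \in \opt{\Theta, s}$ for some $v_2 \in \opt{\Theta_2, s_2}$; since $v_1'$ is dominated and hence not in $\opt{\Theta_1, s_1}$, this optimal element of the merged set is absent from $\merge{\opt{\Theta_1, s_1}, \opt{\Theta_2, s_2}, s, f}$, violating the principle. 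The argument for anti-monotonicity is analogous: a violation yields an infeasible partial solution (and hence one outside $\opt{\Theta_1, s_1}$) whose combination nonetheless lands in $\opt{\Theta, s}$, again unreachable from the optimal, feasible subtree sets.

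I expect the main obstacle to be the necessity direction, specifically ensuring that the counterexamples are genuine rather than vacuous. The definitions of order preservation and anti-monotonicity already bundle in their conclusions that the offending combination is excluded from $\opt{\Theta, s}$, so the delicate point is to verify that negating these conclusions actually forces membership in the optimal final set --- one must rule out that the bad combination is dominated for some unrelated reason, and handle ties where several distinct partial solutions map to the same combined value. Care is also needed where the branching cost $g(s,f)$ and the union over features interact with dominance, so that combining with $g(s,f)$ neither manufactures nor destroys dominance relations; this is precisely where the Markovian assumption on $g$ is quietly used.
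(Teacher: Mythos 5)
Your route is the same as the paper's: invoke Prop.~\ref{prop:seperable} to reduce Theorem~\ref{th:sep} to the claim that, given Markovian $g$ and $t$, the principle of optimality is equivalent to order preservation plus anti-monotonicity, and then analyze a single branching step. Your two sufficiency cases (infeasible components excluded via Def.~\ref{def:anti_monotonic}, dominated components via Def.~\ref{def:order_preservation}) and your contradiction-based necessity argument (a suboptimal component whose combination lands in the final front contradicts the splitting property) are exactly the paper's cases (a) and (b) and its necessity paragraphs.

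There is, however, one concrete gap in your sufficiency direction. Writing $\theta_i = \opt{\Theta_i, s_i}$, your argument establishes only the inclusion $\opt{\Theta, s} \subseteq \merge{\theta_1, \theta_2, s, f}$ (every front element decomposes into optimal, feasible parts) and then asserts the set equality of the principle of optimality. The reverse inclusion is not automatic: an element of $\opt{\merge{\theta_1, \theta_2, s, f}, s}$ could a priori be dominated inside the full set $\Theta$ by a combination built from a non-optimal component, and hence be absent from $\opt{\Theta, s}$, breaking equality. The paper proves this half separately (its ``superset'' case): if such an element $v$ were dominated by some $v' \in \opt{\Theta, s}$ (transitivity of $\succ$ lets one take the dominator in the front), then by the already-proved forward inclusion $v'$ lies in $\merge{\theta_1, \theta_2, s, f}$ and is feasible there, contradicting that $v$ is non-dominated within that set. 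You need to add this step; without it the equality is unproved. On the necessity side, the obstacles you flag --- ties where a dominated component produces the same combined value as an optimal one, and combinations excluded from the front for unrelated reasons --- are genuine, but be aware that the paper's own necessity proof does not resolve them either: its step ``if $v' \in \theta$ then $v_1' \in \theta_1$'' silently assumes distinct component choices yield distinct combined values (consider $\oplus = \max$ under minimization, where the principle of optimality holds yet Def.~\ref{def:order_preservation} fails through such a tie). So your caution is warranted, but on that half you leave the difficulty open exactly as the paper does.
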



\subsection{Dynamic programming formulation}
 We now present the general DP framework \emph{\dtc} (Separable Trees with Dynamic programming, pronounced as \emph{street}): 
\begin{equation}
\label{eq:dp_dtc_filter}
    T(s, d) = 
    \begin{cases}
    \operatorname{opt}\left(
    \bigcup_{\hat{k} \in \mathcal{K}} \{~ g(s, \hat{k}) ~\}
    , s \right) & d=0 \\
    \operatorname{opt}\left( 
    \textstyle{\bigcup_{f \in \mathcal{F}}} \operatorname{merge} \left( 
    T(t(s, f), d-1),
     T(t(s, \bar{f}), d-1), s, f \right), s
     \right)
     & d > 0
    \end{cases}
\end{equation}

Pseudo-code for \dtc{} and additional algorithmic techniques for speeding up computation, such as a special depth-two solver, caching, and upper and lower bounds, as well as techniques for sparse trees and hypertuning, can be found in Appendix~\ref{app:pseudocode}. The appendix also provides the conditions for the use of these techniques.
Furthermore, in Appendix~\ref{app:proofs} we prove the following Theorem:

\begin{theorem}
\label{th:sep_opt}
\dtc{}, as defined in Eq.~\eqref{eq:dp_dtc_filter}, finds the Pareto front for any optimization task that is \emph{separable} according to Def.~\ref{def:separable}.
\end{theorem}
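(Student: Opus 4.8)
The plan is to prove Theorem~\ref{th:sep_opt} by complete induction on the maximum depth $d$, mirroring the induction structure already announced for Proposition~\ref{prop:seperable}. The goal is to show that $T(s,d)$ as defined in Eq.~\eqref{eq:dp_dtc_filter} equals $\opt{\Theta_d(s), s}$, where $\Theta_d(s)$ denotes the set of cost values $\operatorname{C}(s,r)$ ranging over \emph{all} trees $\tau$ of depth at most $d$ rooted at $r$ (with cost computed by Eq.~\eqref{eq:calc_cost_s4}). Proving this equality for every $s$ and every $d$ is exactly the claim that \dtc{} returns the Pareto front.

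\emph{Base step} ($d=0$). At depth $0$ every tree is a single leaf, so $\Theta_0(s) = \bigcup_{\hat{k}\in\mathcal{K}} \{g(s,\hat k)\}$ by the leaf case of Eq.~\eqref{eq:calc_cost_s4}. The recurrence sets $T(s,0) = \opt{\bigcup_{\hat k} \{g(s,\hat k)\}, s}$, which is literally $\opt{\Theta_0(s), s}$. This step reduces to the observation that $g$ is Markovian, so $g(s,\hat k)$ depends only on the current state, giving a well-defined finite candidate set on which $\operatorname{opt}$ acts.

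\emph{Induction step} ($d>0$). Assume $T(t(s,f),d-1) = \opt{\Theta_{d-1}(t(s,f)), s_1}$ and likewise for $\bar f$, for every $f$, where $s_1 = t(s,f)$, $s_2 = t(s,\bar f)$. I first expand the full candidate set: any depth-$\le d$ tree is either a leaf or branches on some $f$ with a left and right subtree of depth $\le d-1$, so by Eq.~\eqref{eq:calc_cost_s4},
\begin{equation}
\Theta_d(s) = \Theta_0(s) \cup \textstyle\bigcup_{f\in\mathcal{F}} \merge{\Theta_{d-1}(s_1), \Theta_{d-1}(s_2), s, f},
\end{equation}
since $\operatorname{merge}$ is exactly the set of values $v_1 \oplus v_2 \oplus g(s,f)$. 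The task is then to show $\opt{\Theta_d(s),s}$ equals the recurrence's output, which applies $\operatorname{opt}$ to the union over $f$ of $\merge{T(s_1,d-1), T(s_2,d-1), s, f}$, i.e.\ $\operatorname{merge}$ restricted to the \emph{optimal} subtree fronts rather than the full ones. The crux is therefore a replacement lemma: restricting each $\merge{\cdot}$ input from $\Theta_{d-1}$ to $\opt{\Theta_{d-1},\cdot}$ does not change the resulting Pareto front.

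This replacement step is where the separability hypothesis does all the work and is the main obstacle. I would establish it in two directions. First, that nothing optimal is lost: any $v = v_1 \oplus v_2 \oplus g(s,f) \in \opt{\Theta_d(s),s}$ must arise from feasible, nondominated $v_1, v_2$. Feasibility of the parts follows from anti-monotonicity (Def.~\ref{def:anti_monotonic}): if either $v_1 \notin \opt{\Theta_{d-1}(s_1),s_1}$ due to $\neg c(v_1,s_1)$, or the analogue for $v_2$, then $v \notin \opt{\Theta_d(s),s}$, a contradiction. Nondomination of the parts follows from order preservation (Def.~\ref{def:order_preservation}): if $v_1$ were dominated by some $v_1' \in \opt{\Theta_{d-1}(s_1),s_1}$, then $v_1' \oplus v_2 \oplus g(s,f) \succ v$, again contradicting $v \in \opt{\Theta_d(s),s}$ (and the non-commutative clause of Def.~\ref{def:order_preservation} covers the symmetric case). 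Hence every optimal value is generated from the restricted fronts. Second, that nothing spurious is introduced: since the restricted merge set is a subset of $\Theta_d(s)$, applying $\operatorname{opt}$ to it and to the full set can only differ if $\operatorname{opt}$ of the subset contains a value dominated within $\Theta_d(s)$ — but the two preceding arguments show any such dominator is itself in the restricted merge set, so the outer $\operatorname{opt}$ removes it identically in both cases. Combining both directions yields $\opt{\Theta_d(s),s} = T(s,d)$, closing the induction.

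One technical point I would flag: the argument implicitly invokes transitivity of $\succ$ and the fact that $\operatorname{opt}$ applied to the union equals $\operatorname{opt}$ of the union of the per-$f$ optima, which holds because $\operatorname{nondom}$ idempotently commutes with unions. I would state this as a short auxiliary fact (or cite the earlier $\operatorname{nondom}$ definition) rather than belabor it, and lean on Proposition~\ref{prop:seperable} and Theorem~\ref{th:sep} to translate the separability hypothesis of the statement into the concrete properties (Markovian $g,t$; order-preserving $\oplus$; anti-monotonic $c$) used above.
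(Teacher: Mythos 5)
Your overall strategy coincides with the paper's: induction on $d$, a base case that follows from $g$ being Markovian, and an induction step that reduces to a ``replacement'' property --- that merging the \emph{optimal} subtree fronts instead of the full solution sets, and then applying $\operatorname{opt}$, yields the same Pareto front. The paper discharges that reduction in one line by invoking splittability (the principle of optimality, Eq.~\eqref{eq:sep_cond}), which separability supplies via Prop.~\ref{prop:seperable}; you instead re-derive the replacement property from order preservation and anti-monotonicity, which essentially inlines the subset/superset argument that the paper keeps inside its proof of Theorem~\ref{th:sep}. That is mathematically equivalent, just redundant given that you also cite Theorem~\ref{th:sep}, and your acknowledged reliance on transitivity of $\succ$ and on $\operatorname{opt}$ commuting with unions matches steps the paper also uses (Eq.~\eqref{eq:filter_sub}).

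There is, however, one concrete gap. You define $\Theta_d(s)$ over all trees of depth \emph{at most} $d$, so your decomposition rightly contains the extra term $\Theta_0(s)$ (a single leaf at the root). But the recurrence in Eq.~\eqref{eq:dp_dtc_filter} never produces that term when $d>0$: its $d>0$ case only takes unions of $\operatorname{merge}$ over branching features, and by Eq.~\eqref{eq:merge} every merged value includes the branching cost $g(s,f)$. Your argument then silently drops $\Theta_0(s)$ --- the replacement lemma concerns only the branching part --- so the equality $T(s,d)=\opt{\Theta_d(s), s}$ is never established for \emph{your} $\Theta_d(s)$, and in fact it is false in general. Take cost-sensitive classification (Eq.~\eqref{eq:g_csc}) on a dataset whose instances all share one label: the root leaf has cost $0$, while every tree that branches at the root costs at least $|\mathcal{D}|\cdot m(F,f)>0$, so $\opt{\Theta_1(s), s}=\{0\}$ whereas $T(s,1)$ contains only positive values. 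The paper avoids this by decomposing its solution set for $d>0$ as exactly $\bigcup_{f\in\mathcal{F}}\Theta_f(s,d)$ with no leaf term, i.e., it (implicitly) takes the solution space to be the trees the recurrence actually searches, in which a node at depth less than $d$ can act as a leaf only by branching with identical labels in both children. To close your proof you must either adopt that definition of the solution space, dropping $\Theta_0(s)$ from your decomposition, or add a leaf case to the $d>0$ recurrence --- which Eq.~\eqref{eq:dp_dtc_filter} as written does not have.
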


\subsection{Examples of separable optimization tasks}
\label{sec:obj_examples}
To illustrate the flexibility of \dtc{}, we present four diverse example applications for which separable optimization tasks can be formulated. The first two are also covered by the \emph{additivity} condition from~\citep{nijssen2010dl8_constraints}. The last two do not fit this condition but are covered by our framework.

\paragraph{Cost-sensitive classification}
Cost-sensitive classification considers costs related to obtaining (measuring) the value of features in addition to misclassification costs \citep{lomax2013survey_costsensitive}.
Typical applications are in the medical domain when considering expensive diagnostic tasks and asymmetrical misclassification costs. See Appendix~\ref{app:csc} for a more comprehensive introduction and related work. 

Let $\mathcal{D}$ be the instances in a leaf node and let $M_{k,\hat{k}}$ be the misclassification costs when an instance with true label $k$ is assigned label $\hat{k}$, and let $m(F, f)$ be the costs of obtaining the value of feature $f$ (this value also depends on the set of previously measured features $F$, because we consider that some tests can be performed in a group and share costs).
This results in the following cost function:
\begin{align}
\label{eq:g_csc}
g(\langle \mathcal{D}, F \rangle, \hat{k}) &= \sum_{(x,k) \in \mathcal{D}} M_{k,\hat{k}} &
g(\langle \mathcal{D}, F \rangle, f) &= |\mathcal{D}| \cdot m(F, f)
\end{align}
This optimization task is separable because the function $g$ is Markovian, the combination operator $\oplus$ is addition, and the comparison operator is $<$.

\paragraph{Prescriptive policy generation}
Decision trees can also be used to prescribe policies based on historical data, to maximize the expected reward (revenue) of the policy \citep{kallus2017prescriptive_trees, bertsimas2019prescriptive}. An example is medical treatment assignment based on historical treatment response. This is done by reasoning over counterfactuals (the expected outcome if an action other than the historical action was taken).

In Appendix~\ref{app:ppg} we provide related work and show three different methods for calculating the expected reward from the literature: regress and compare, inverse propensity weighting, and the doubly robust method. The cost function for each of these is Markovian and the combining operator is addition. Therefore, prescriptive policy generation can be formulated as a separable optimization task. 
It is also possible to add constraints to the policy, for example, when maximizing revenue under a capacity constraint. We show in Appendix~\ref{app:threshold_constraint} that capacity constraints are also separable.

\paragraph{Nonlinear classification metrics}
Nonlinear objectives such as F1-score and Matthews correlation coefficient are typically used for unbalanced datasets. \citet{demirovic2021nonlinear} provide a globally optimal DP formulation for optimizing such nonlinear objectives for binary classification.
\dtc{} can also optimize for these metrics. For F1-score, for example, one can formulate a combined optimization task that measures the false positive rate for each class $k\in \{0,1\}$ as a separate optimization task:
\begin{equation}
\label{eq:obj_multi_obj_opt}
    g_k(\mathcal{D}, \hat{k}) = \begin{cases}
        | \{(x, k') \in \mathcal{D} ~|~ k' \neq \hat{k} \} | & \hat{k} = k \\
        0 & \text{otherwise}
    \end{cases}
\end{equation}
Both of these tasks are separable, and in Appendix~\ref{app:multi_sep} we show that multiple separable tasks can be combined into a new separable optimization task that results in the Pareto front for misclassifications for each class. This Pareto front can then be used to find the decision tree with, e.g., the best F1-score. See Appendix~\ref{app:multi} for more related work and other details.

\paragraph{Group fairness}
Optimizing for accuracy can result in unfair results for different groups and therefore several approaches have been recommended to prevent discrimination~\citep{mehrabi2021ml_fairness_survey}, one of which is demographic parity~\citep{dwork2012fairness_awareness}, which states that the expected outcome for two classes should be the same. Previous MIP formulations for group fairness have been formulated \citep{aghaei2019nondiscriminative, jo2022fair_mip} and recently also a DP formulation, even though at first hand a group fairness constraint does not seem separable \citep{linden2022fair_dt_dp}.

Let $a$ denote a discrimination-sensitive binary feature and $y$ a binary outcome, with $y=1$ the preferred outcome and $\hat{y}$ the predicted outcome; then demographic parity can be described as satisfying $P(\hat{y} = 1 ~|~ a=1) = P(\hat{y}=1 ~|~ a=0)$. 
This requirement is typically satisfied by limiting the difference to some percentage $\delta$. Let $N(a)$ be the number of people in group $a$, and $N({\bar{a}})$ the people not in group $a$.
The demographic parity requirement can now be formulated as two separable threshold constraints:
\begingroup
\allowdisplaybreaks
\begin{align}
\label{eq:dempar_thres}
g_a(\mathcal{D}, \hat{k}) &= 
\begin{cases}
\frac{|\mathcal{D}_a|}{N(a)} & \hat{k} = 1 \\
\frac{|\mathcal{D}_{\bar{a}}|}{N(\bar{a})} & \hat{k} = 0
\end{cases} &
g_{\bar{a}}(\mathcal{D}, \hat{k}) &= 
\begin{cases}
\frac{|\mathcal{D}_a|}{N(a)} & \hat{k} = 0 \\
\frac{|\mathcal{D}_{\bar{a}}|}{N(\bar{a})} & \hat{k} = 1
\end{cases}%
\end{align}%
\endgroup%
These cost functions with the two threshold constraints that limit both to $1 + \delta$, and an accuracy objective can be combined into one separable optimization task as shown in Appendix~\ref{app:threshold_constraint} and~\ref{app:multi_sep}, and therefore it can be solved to optimality with \dtc. 
We show the derivation of these constraints, a similar derivation for equality of opportunity, and more related work in Appendix~\ref{app:group_fair}.
Note that we require demographic parity on the whole tree and not on every subtree. In fact, requiring demographic parity on every subtree is an example of a constraint that does not satisfy anti-monotonicity.

\subsection{Comparison to previous theory and methods}
\paragraph{Dynamic programming theory}
In contrast to what is common in general DP theory, our theory does not assume solution values to be real valued \citep{karp1967dp_theory} or totally ordered \citep{elmaghraby1970dp_state}.
For example, \citet{karp1967dp_theory} formulate the \emph{monotonicity} requirement of a DP as follows: for a given state $s$, action $a$, cost $c$ and data $p$, let $h(c, s, a, p)$ be the cost of reaching state $t(s, a)$ through an input sequence that reaches state $s$ with cost $c$. Monotonicity holds iff $c_1 \leq c_2$ implies $h(c_1, s, a, p) \leq h(c_2, s, a, p)$. In words: if a partial input sequence $A_1$ with cost $c_1$ dominates another input sequence $A_2$ with $c_2$, then any other input sequence that starts with $A_1$ dominates any other input sequence that starts with $A_2$, thus satisfying the principle of optimality. This notion is similar to our \emph{order preservation}, but we do not assume that the costs are real valued or that the costs are totally ordered.

Similarly, \citet{li1991dp_nonsep} show how multiple optimization tasks can be combined into one, as we also show in Appendix~\ref{app:multi_sep}. They construct a separable DP formulation for optimization tasks that can be deconstructed into a series of optimization tasks, each of which is separable and monotonic. Their method also returns the set of nondominated solutions. 
However, their theory assumes that the solution value for each sub-objective is real, completely ordered, and additive. This limits their theory to element-wise additive optimization tasks. Our theory does not share these limitations. 

\paragraph{Dynamic programming approaches} In comparison to the DP methods DL8 \citep{nijssen2010dl8_constraints} and GOSDT \citep{lin2020generalized_sparse}, \dtc{} covers a wider variety of objectives and constraints.
Both \dtc{} and DL8 require constraints to be \emph{anti-monotonic}. GOSDT does not support constraints.
DL8 and GOSDT require \emph{additivity}, whereas \dtc{} requires the less restrictive condition \emph{order preservation} (see Def.~\ref{def:order_preservation}). Moreover, \dtc{} can cover a range of new objectives by also allowing for objectives and constraints with a partially defined comparison operator, and by allowing to combine several such objectives and constraints.
Examples of problems that could not be solved to global optimality by DL8 and GOSDT, but can be solved with our framework, are group fairness constraints, nonlinear metrics, and revenue maximization under a capacity constraint.
Moreover, compared to DL8 and GOSDT, \dtc{} implements several algorithmic techniques for increasing scalability, such as a specialized solver for trees up to depth two (see Appendix~\ref{app:pseudocode}).

\paragraph{Other approaches}
MIP can be used to model many optimization tasks, including non-separable optimization tasks. For example, optimizing decision tree policies for Markov Decision Processes can be optimized with MIP \citep{vos2023odt_mdp}, but it is not clear how to do this with DP. However, MIP cannot deal with separable, but non-linear objectives, such as F1-score. Moreover, as our experiments also show, DP outperforms the scalability of MIP for the considered optimization tasks by several orders of magnitude.

CP so far has only been applied to maximizing accuracy \citep{verhaeghe2020cp_odt} and (Max)SAT only to maximizing accuracy \citep{hu2020maxsat_odt, shati2023sat_odt_nonbin} or finding the smallest perfect tree \citep{narodytska2018sat_odt, janota2020dt_sat_expl_paths}.

\citet{dunn2018thesis} proposes a framework that can optimize many objectives for decision trees (see also~\citep{bertsimas2019book_ml_opt_lens}). Because of MIP's scalability issues, they propose a local search method based on coordinate descent. However, this method does not guarantee a globally optimal solution. Moreover, their method is not able to find a Pareto front that optimizes multiple objectives at the same time.


\section{Experiments}
To show the flexibility of \dtc, we test it on the four example domains introduced in Section~\ref{sec:obj_examples}. Additionally, we compare \dtc{}  with two state-of-the-art dynamic programming approaches for maximizing classification accuracy.
\dtc{} is implemented in C++ and is available as a Python package.\footnote{ \url{https://github.com/AlgTUDelft/pystreed}} 
All experiments are run on a 2.6 GHz Intel i7 CPU with 8GB RAM using only one thread. MIP models are solved using Gurobi 9.0 with default parameters \citep{gurobi}.

In our analysis, we focus on scalability performance in comparison to the state-of-the-art optimal methods (if available).
The following shows a summary of the results.
The setup of all experiments, more extensive results, and related work of the application domains is provided in the appendices~\ref{app:csc}-\ref{app:acc}. 

\subsection{Cost-sensitive classification}
For cost-sensitive classification, we compare \dtc{} with TMDP~\citep{maliah2021pomdp_cost_sensitive} both in terms of average expected costs and scalability.
A direct fair comparison with TMDP is difficult, because TMDP provides no guarantee of optimality, allows for multi-way splits, and is run without a depth limit. 
We test \dtc{} and TMDP on three setups for 15 datasets from the literature and report normalized costs. We tune \dtc{} with a depth limit of $d=4$. 
In Appendix~\ref{app:csc} we show extended results and also compare with several heuristics from the literature.
To our knowledge, no MIP methods for cost-sensitive classification including group discounts (i.e., when certain features are tested together, the feature cost decreases) exist, and therefore are not considered here.

For 21 out of 45 scenarios \dtc{} generates trees with significantly lower average out-of-sample costs than TMDP ($p$-value $< 5\%$). TMDP has lower costs in 1 out of 45 scenarios. In all other scenarios, there is no significant difference. Both methods return trees that perform better and are smaller than those constructed by heuristics.
Fig.~\ref{fig:csc_method_runtime_compare_main} shows that \dtc{} is significantly faster than TMDP. Thus, it can be concluded that \dtc{} on average has slightly better out-of-sample performance than TMDP, returns trees of similar size, and is significantly faster than TMDP.

\begin{figure*}[t!]
\centering%
\begin{subfigure}{.5\textwidth}
    \centering
    \includegraphics[width=0.98\textwidth]{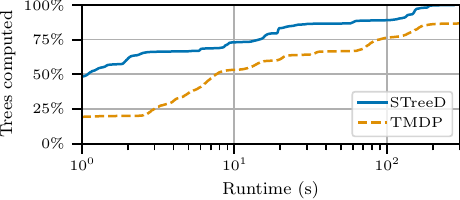}
    \subcaption{Cost-sensitive classification}
    \label{fig:csc_method_runtime_compare_main}
\end{subfigure}%
\begin{subfigure}{.5\textwidth}
    \centering
    \includegraphics[width=0.98\textwidth]{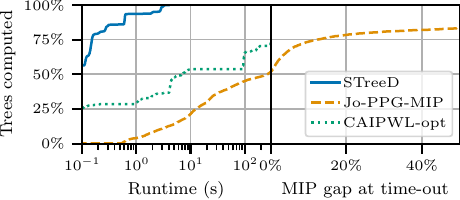}
    \subcaption{Prescriptive policy generation (synthetic \& Warfarin)}
    \label{fig:ppg_combined_runtime}
\end{subfigure}
\begin{subfigure}{.5\textwidth}
\vspace{.05in}
    \centering
    \includegraphics[width=0.98\textwidth]{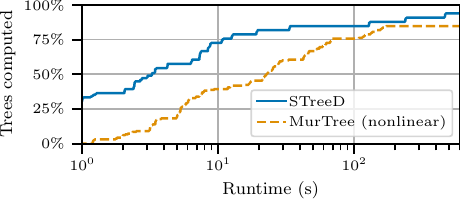}
    \subcaption{Nonlinear classification metrics: F1-score}
    \label{fig:biobj_runtime_ecdf}
\end{subfigure}%
\begin{subfigure}{.5\textwidth}
\vspace{.05in}
\centering
    \includegraphics[width=0.98\textwidth]{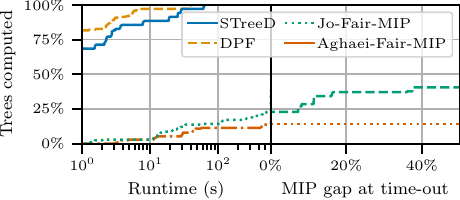}
\subcaption{Group fairness: Demographic parity}
\label{fig:demographic_parity_runtime_ecdf}
\end{subfigure}%
\caption{The percentage of instances for which the algorithm returns the (optimal) tree within the given runtime (higher is better). When MIP methods hit the timeout, the plot shows which percentage of problems are solved within the reported MIP gap. Instances for which all methods return a tree within one second are left out. Note the logarithmic x-axis. }%
\end{figure*}

\subsection{Prescriptive policy generation}
For prescriptive policy generation, we compare our method with the MIP model Jo-PPG-MIP~\citep{jo2021prescriptive_trees} and the recursive tree search algorithm CAIPWL-opt~\citep{zhou2022multi-action-policy}. \citet{jo2021prescriptive_trees} show both analytically and through experiments the benefit of their method over \citep{bertsimas2019prescriptive} and \citep{kallus2017prescriptive_trees}. Their method constructs optimal binary trees based on three different teacher models: regress and compare, inverse propensity weighting, and a doubly robust method. CAIPWL-opt constructs optimal binary trees based on the doubly robust method. Our method implements the same teacher models; thus, all methods find the same solution. Therefore, we compare here only the runtime performance.

As described in detail in Appendix~\ref{app:ppg}, we test on the Warfarin dataset~\citep{consortium2009warfarin} and generate 225 training scenarios with 3367 instances and 29 binary features and run each algorithm with a given maximum depth of $d\in [1,5]$. We also generate 275 synthetic datasets with 10, 100, and 200 binary features, each with 500 training samples, and run each algorithm on every scenario with a given maximum depth of $d=1,2$ and $3$. When Jo-PPG-MIP hits the timeout of 300 seconds, we report the MIP gap.

Fig.~\ref{fig:ppg_combined_runtime} shows that \dtc{} can find optimal solutions orders of magnitude faster than both Jo-PPG-MIP and CAIPWL-opt while obtaining the same performance in maximizing the expected outcome.

\subsection{Nonlinear classification metrics}
For assessing \dtc's performance on optimizing nonlinear metrics such as F1-score, we first compare it with the existing nonlinear MurTree method \citep{demirovic2021nonlinear}. We tasked both methods to find optimal trees of depth 3-4 according to an F1-score metric for 25 binary classification datasets from the literature. Fig.~\ref{fig:biobj_runtime_ecdf} shows that \dtc, while being more general, is on average seven times faster than MurTree (computed with geometric mean). This performance gain is in part due to a more efficient computation of the lower and upper bounds, as explained in Appendix~\ref{app:pseudocode}.

This shows that \dtc{} can be applied to nonlinear objectives and even improves on the performance of a state-of-the-art specialized DP method for this task.

\subsection{Group fairness}
For group fairness, we follow the experiment setup from \citep{linden2022fair_dt_dp} as explained in Appendix~\ref{app:group_fair}. The task is to search for optimal trees 
 for 12 datasets and for trees of depth $d\in [1,3]$ under a demographic parity constraint with at most $1\%$ discrimination on the training data.
 We compare to the DP-based method DPF \citep{linden2022fair_dt_dp} and the MIP methods Aghaei-Fair-MIP \citep{aghaei2019nondiscriminative} and Jo-Fair-MIP \citep{jo2022fair_mip}. When the MIP methods hit the timeout of 600 seconds, we report the MIP gap.

Fig.~\ref{fig:demographic_parity_runtime_ecdf} shows the runtime results. As expected, DPF performs best since it is a DP method specifically designed for this task. 
\dtc{} is more general and yet remains close to DPF, while being several orders of magnitude faster than both MIP methods.
In Appendix~\ref{app:group_fair} we further show how \dtc{} is the first optimal DP method for optimizing an equality-of-opportunity constraint.

It can be concluded that \dtc{} allows for easy modeling of complex constraints such as demographic parity and equality of opportunity while outperforming MIP methods by a large margin.

\subsection{Classification accuracy}
Finally, we compare \dtc{} with two state-of-the-art dynamic programming approaches for optimal decision trees that maximize classification accuracy: DL8.5 \citep{aglin2020learning, aglin2020pydl8} and MurTree \citep{demirovic2022murtree}. All three methods find optimal decision trees for maximum depth four and five for 25 binary classification datasets from the literature. The results in Appendix~\ref{app:acc} show that on average \dtc{} is more than six times faster than DL8.5 and is close to MurTree, with MurTree being 1.25 faster (computed with the geometric mean).
Since DL8.5 outperforms DL8 \citep{aglin2020learning} and MurTree outperforms GOSDT by a large margin \citep{demirovic2022murtree}, we can conclude that \dtc{} not only allows for a broader scope of optimization tasks than DL8 and GOSDT, but is also more scalable.

\section{Conclusion}
We present \dtc: a general framework for finding optimal decision trees using dynamic programming (DP).
\dtc{} can optimize a broad range of objectives and constraints, including nonlinear objectives, multiple objectives, and non-additive objectives, provided that these objectives are separable.
We apply DP theory to provide necessary and sufficient requirements for such separability in decision tree optimization tasks and introduce the new notion order preservation that guarantees the principle of optimality \citep{bellman1957dp}. We show that these results are more general than the state-of-the-art general DP solvers for decision trees \citep{nijssen2010dl8_constraints,lin2020generalized_sparse}.

We illustrate the generalizability of \dtc{} in five application domains, two of which are not covered by the previous general DP solvers. For each, we present a separable optimization formulation and compare \dtc's scalability to the domain-specific state-of-the-art. The results show that \dtc{} sustains or improves the scalability performance of DP methods, while being flexible, and outperforms mixed-integer programming methods by a large margin on these example domains.

\dtc{} allows for a much broader set of applications than presented here.
Therefore, future work should further investigate \dtc's performance on, for example, regression tasks and other non-additive objectives. \dtc{} could also be used to further develop optimization of trees that lack explanation redundancy \cite{izza2022dt_exp_redundancy}.
Finally, \dtc{} could further be improved by adding multi-way branching and allowing for continuous features.






\section*{Acknowledgments}
Thanks to Jason Liartis for pointing out several errors in Appendix~\ref{app:proofs} and~\ref{app:pseudocode} and for suggesting how to correct them.

\DeclareRobustCommand{\VAN}[3]{#3}
{
\small

\bibliography{manual_references,references}

}

\DeclareRobustCommand{\VAN}[3]{#2}
\newpage

\appendix

\section{Proofs}
\label{app:proofs}

The following sections provide proofs for the propositions and theorems from the main text. 
We first prove Prop.~\ref{prop:seperable} that claims that an optimization task is separable if it has a Markovian cost and transition function and if it satisfies the principle of optimality. We then prove Theorem~\ref{th:sep} about the necessary and sufficient conditions for an optimization task to be separable. Next, we prove Theorem~\ref{th:sep_opt} about \dtc{} being able to solve separable optimization tasks to optimality. 
We extend the theory of the main text by introducing anti-monotonic threshold constraints.
Finally, we show how multiple separable optimization tasks can be combined into a new separable optimization task. 

\subsection{Proof of Prop.~\ref{prop:seperable} about Separability}
Def.~\ref{def:separable} defines separability by stating that for a given optimization task the optimal solution to any subtree in a tree search should be independent of any of the decision variables other than the decision variables of the subtree search and the parent node branching decisions. Prop.~\ref{prop:seperable} states that this is the same as saying that the optimization task has a Markovian cost and transition function and satisfies the principle of optimality \citep{bellman1957dp}, i.e., optimal solutions can only be constructed from optimal solutions to subproblems:

\begin{definition}[Principle of Optimality]
\label{def:principle_of_optimality}
Let $o = \langle g, t, {\succ}, {\oplus}, c, s_0 \rangle$ be an optimization task, let $s$ be a state, let $f$ be a branching decision, and let $\Theta_1$ and $\Theta_2$ be all the solutions to subproblems with state $t(s, f)$ and $t(s, \bar{f})$, and let $\Theta = \merge{\Theta_1, \Theta_2, s, f}$. 
Then optimization task $o$ satisfies the \emph{principle of optimality} if and only if for every state $s$, branching decision $f$ and solution sets $\Theta_1$ and $\Theta_2$, the following holds:
\begin{equation}
    \label{eq:sep_cond}
     \opt{\Theta, s} = \opt{\merge{\opt{\Theta_1, t(s, f)}, \opt{\Theta_2, t(s, \bar{f})}, s, f}}
\end{equation}
\end{definition}

With this definition in place, we can prove Prop.~\ref{prop:seperable}.

\begin{proof}
Prop.~\ref{prop:seperable} requires two conditions: a Markovian cost and transition function, and the principle of optimality.
To prove Prop.~\ref{prop:seperable}, we need to show that both conditions are necessary and sufficient.
Therefore, we first show that if both conditions hold, the optimization task must be separable.
We then show that if one of the two conditions of Prop.~\ref{prop:seperable} fails, the optimization task is not separable.

\emph{Sufficient:~} To show that the two conditions of Prop.~\ref{prop:seperable} are sufficient we use total induction over the tree depth $d$.

The base step ($d=0$), considers the search for optimal leaf nodes. If a Markovian cost function $g(s, \hat{k})$ exists, the optimal label assignment can be determined by $\opt{ \cup_{\hat{k} \in \mathcal{K}} g(s, \hat{k}), s}$. This does not depend on any decision variable other than the state $s$ and the leaf node label assignment $\hat{k}$ and therefore Def.~\ref{def:separable} is satisfied for all tree searches up to depth zero.

The induction step (for $d>0$) assumes Def.~\ref{def:separable} is satisfied for all tree searches up to depth $d-1$. For these tree searches, optimal solutions can now be constructed by iterating over all possible branching decisions for the root node, using Eq.~\eqref{eq:sep_cond} to find optimal solutions when the root node branching decision is fixed, and using $\operatorname{opt}$ to select the optimal solution from the union of all these solution sets. Since neither the subtree search, nor Eq.~\eqref{eq:sep_cond}, nor the transition function, nor the branching cost function, nor the union operator, nor the $\operatorname{opt}$ function depend on any decision variable other than the decision variables for this tree search and the parent node decisions, the optimal solution for this search can also be constructed independently of those variables. 

\emph{Necessary conditions:~} If an optimization task is separable, a Markovian cost function and transition function necessarily exist.
Since a node's state is the product of applying the transition function on the previous state, this state can be considered as a function of the initial state $s_0$, the parent nodes' branching decisions $F$, and the immediate action: the branching decision or the label assignment.
Now assume that no Markovian cost and transition function exist. This means that the cost of an action cannot be expressed in terms of only the state, the immediate action, and the parent's branching decisions. This means that Def.~\ref{def:separable} is not satisfied: a contradiction. Therefore a Markovian cost and transition function necessarily exists if an optimization task is separable.

For tree searches with a depth larger than zero, optimal solutions should be able to be constructed from optimal solutions to subtree searches, as defined in Eq.~\eqref{eq:sep_cond}. 
Consider such a subtree search, which after deciding on the branching decision in its root node, creates two subproblems for the resulting children nodes. The optimal solution should be constructible from the optimal solution to these subproblems according to Eq.~\eqref{eq:sep_cond}. If the optimal solution to this tree search cannot be constructed from the optimal solution to its subtree searches, then the optimal solutions to the subtree searches are not optimal: a contradiction. Therefore, Eq.~\eqref{eq:sep_cond} is a necessary condition.
\end{proof}

\subsection{Proof of Theorem~\ref{th:sep} about Conditions for Separability}
\label{app:proof_condition_separability}
Theorem~\ref{th:sep} states that an optimization task is separable if its cost and transition functions are Markovian, its combining operator preserves order over the comparison operator and its constraint is anti-monotonic.

To prove that these conditions are both sufficient and necessary, we need to show that the conditions mentioned in Theorem~\ref{th:sep} are equivalent to those mentioned in Prop.~\ref{prop:seperable}. We will first show that these conditions are sufficient, and then that they are also necessary.

\begin{proof}
Let $o = \langle g, t, \succ, \oplus, c, s_0 \rangle$ be an optimization task, for which it holds that the cost function $g$ and transition function $t$ is Markovian,
the combining operator $\oplus$ preserves order over $\succ$ and the constraint $c$ is anti-monotonic. Furthermore, since the comparator $\succ$ is used to determine Pareto dominance, we may assume that it is \emph{transitive}.

We will call an optimization task that satisfies Eq.~\eqref{eq:sep_cond} \emph{splittable}. We begin by proving that $o$ is splittable. Consider any state $s$, any branching feature $f$, such that $s_1 = t(s, f)$ and $s_2 = t(s, 
\bar{f})$, and let $\Theta, \Theta_1$ and $\Theta_2$ be sets of solutions such that $\Theta = \operatorname{merge}(\Theta_1, \Theta_2, s, f)$, and let the optimal solutions from these sets be given by $\theta = \operatorname{opt}(\Theta, s)$, $\theta_1 = \operatorname{opt}(\Theta_1, s_1)$ and $\theta_2 = \operatorname{opt}(\Theta_2, s_2)$. Then, according to Eq.~\eqref{eq:sep_cond} an optimization task is splittable if and only if:
\begin{equation}
    \label{eq:sep_cond2}
    \theta = \operatorname{opt}(\operatorname{merge}(\theta_1, \theta_2, s, f), s)
\end{equation}
For this to hold the solution set $\theta$ must be both a subset and a superset of the right-hand side of Eq.~\eqref{eq:sep_cond2}. We consider both next. However, for brevity of notation, we leave out the states $s, s_1, s_2$ and the branching feature $f$.

\begin{enumerate}
    \item $\theta$ is a subset:
\begin{equation}
    \label{eq:prf_subseteq}
    \theta \subseteq \operatorname{opt}(\operatorname{merge}(\theta_1, \theta_2))
\end{equation}

By definition, any $v \in \theta = \operatorname{opt}(\Theta)$ must be feasible and not dominated by $\operatorname{merge}(\Theta_1, \Theta_2)$, which implies that it is also not dominated by its subset $\operatorname{merge}(\theta_1, \theta_2)$, and therefore Eq.~\eqref{eq:prf_subseteq} holds.

    \item $\theta$ is a superset:
    \begin{equation}
    \label{eq:prf_superseteq}
    \theta \supseteq \operatorname{opt}(\operatorname{merge}(\theta_1, \theta_2))    
    \end{equation}
    
    By contradiction, assume that Eq.~\eqref{eq:prf_superseteq} is false. This means there must exists a solution $v=v_1 \oplus v_2$, with $v\in\operatorname{opt}(\operatorname{merge}(\theta_1,
\theta_2))$, such that $v \notin \theta$. This means that $v$ is either filtered out of $\theta$ by $\operatorname{feas}(\Theta)$ or by $\operatorname{nondom}(\Theta)$.

\begin{enumerate}
    \item If $v$ is filtered out by $\operatorname{feas}$, it cannot be a member of $\operatorname{opt}(\operatorname{merge}(\theta_1,
\theta_2))$, since it also applies $\operatorname{feas}$.
    \item If $v$ is dominated by another solution in $\Theta$, then there must exist a solution $v' \in\Theta : v' \succ v$, but $v' \notin \operatorname{opt}(\operatorname{merge}(\theta_1,\theta_2))$. Because of transitivity of $\succ$, if $v$ is dominated by some $v' \in \Theta$, then also $v$ must be dominated by some $v'' \in \theta$. Thus $v'' \succ v$ and because of Eq.~\eqref{eq:prf_subseteq}, this $v'' \in \operatorname{opt}(\operatorname{merge}(\theta_1,\theta_2))$: a contradiction.
\end{enumerate}

In either case, the assumption is contradicted, and therefore Eq.~\eqref{eq:prf_superseteq} is true.
\end{enumerate}
Since both Eq.~\eqref{eq:prf_subseteq} and Eq.~\eqref{eq:prf_superseteq} are true, Eq.~\eqref{eq:sep_cond2} must be true and thus $o$ is splittable.




Since both Prop.~\ref{prop:seperable} and Theorem~\ref{th:sep} share the conditions that the cost and transition function must be Markovian, and because $o$'s cost and transition function are Markovian and $o$ is splittable, optimization task $o$ is separable by Def.~\ref{def:separable}. Since this holds for any optimization task $o$ that satisfies the conditions of Theorem~\ref{th:sep}, these conditions are sufficient to prove separability.

The conditions are also necessary. The condition of a Markovian cost and transition function is shared among Prop.~\ref{prop:seperable} and Theorem~\ref{th:sep} and therefore obviously necessary. Therefore, what is left to show is that order preservation and anti-monotonic constraints are necessary conditions to show the principle of optimality.
For brevity of notation, we again leave out the states $s, s_1, s_2$ and the branching feature $f$.

\begin{description}
    \item[Order preservation] Let $\Theta = \operatorname{merge}(\Theta_1, \Theta_2)$, with $\theta = \operatorname{opt}(\Theta), \theta_1 = \operatorname{opt}(\Theta_1)$ and $\theta_2 = \operatorname{opt}(\Theta_2)$.
    Now assume that the splitting property holds: This means that for all $v = v_1 \oplus v_2$, with $v \in \theta$, also $v_1 \in \theta_1$ and $v_2 \in \theta_2$ and the other way around.
    Now assume that order preservation is not necessary. This means $\exists v_1' \notin \theta_1$, with $v' = v_1' \oplus v_2$. Consider two cases: if $v' \in \theta$, then $v_1' \in \theta_1$: a contradiction. However, if $v' \notin \theta$, then also $v \notin \theta$: a contradiction. 
    
    
    \item[Anti-monotonicity] If the constraint $c$ is not anti-monotonic, then there exists a solution $v = v_1 \oplus v_2$ with $\neg c(v_1)$, but with $v \in \operatorname{opt}(\Theta)$. If $\neg c(v_1)$, then $v_1 \notin \operatorname{opt}(\Theta_1)$. Thus, the splitting property is not satisfied.
\end{description}

Since each condition separately is necessary and all conditions together are sufficient, Theorem~\ref{th:sep} holds.
\end{proof}

\subsection{Proof of Theorem~\ref{th:sep_opt} about the Optimality of \dtc}
Theorem~\ref{th:sep_opt} states that any \emph{separable} optimization task can be solved to optimality using Eq.~\eqref{eq:dp_dtc_filter} from the main text. Here, we provide a proof for this theorem.

\begin{proof}
Let $\Theta(s, d)$ denote the set of all possible solution values to a decision tree search problem with state $s$, and $d$ the maximum depth of the tree, and let $\theta(s, d)$ be the set of nondominated and feasible solutions for a given separable optimization task $o = \langle g, t, \succ, \oplus, c, s_0 \rangle$:
\begin{equation}
\label{eq:opt}
    \theta(s, d) = \operatorname{opt}(\Theta(s, d), s)
\end{equation}

We need to prove that for every state $s$ and depth $d$, Eq.~\eqref{eq:dp_dtc_filter} returns the optimal solution:
\begin{equation}
    \theta(s, d) = T(s, d)
\end{equation}

We will prove this by induction over $d$.

\emph{Base step:~}
If $d = 0$, because the cost function $g$ is Markovian, the solution set $\Theta(s, 0)$ can be described as:
\begin{equation}
    \Theta(s, 0) = \bigcup_{\hat{k}\in\mathcal{K}} \left\{ g(s, \hat{k}) \right\}
\end{equation}

Therefore the optimal solution set as defined by Eq.~\eqref{eq:opt} is precisely what Eq.~\eqref{eq:dp_dtc_filter} returns for $d=0$:
\begin{equation}
    \theta(s, 0) = T(s, 0)
\end{equation}

Therefore, since we made no further assumptions, the base step of this proof holds.

\emph{Induction step:~}
For $d > 0$, we assume that Eq.~\eqref{eq:dp_dtc_filter} returns the optimal solution for $d-1$:
\begin{equation}
    \label{eq:prf_indct_assum}
    \theta(s, d-1) = T(s, d-1)
\end{equation}

We now need to show that Eq.~\eqref{eq:dp_dtc_filter} when $d>0$ returns the optimal solution:
\begin{equation}
\label{eq:prf_induct_goal}
    \theta(s, d) 
    = \operatorname{opt}(\Theta(s, d))
    = \operatorname{opt}\left( 
     \textstyle{\bigcup_{f \in \mathcal{F}}} 
    \operatorname{merge} \left(
    T(t(s, f), d-1), 
      T(t(s, \bar{f}), d-1) 
     \right) \right)
\end{equation}

Define $\Theta_f(s, d)$ as denoting the set of all possible solution values with feature $f$ as the branching feature of the root:
\begin{equation}
    \label{eq:theta_f}
    \Theta_f(s, d) = \operatorname{merge} \left(
    \Theta(t(s, f), d-1), 
    \Theta(t(s, \bar{f}), d-1), s, f
    \right)
\end{equation}

When $d > 0$, the set $\Theta(s, d)$ can be described as the combination of all sets :
\begin{equation}
    \Theta(s, d) = \bigcup_{f\in\mathcal{F}} \Theta_f(s, d)
\end{equation}

Any solution that would be filtered out in a subset would also be filtered out in the whole set, therefore:
\begin{equation}
    \operatorname{opt}(\cup_{\Theta'} \Theta') = 
    \operatorname{opt}(\cup_{\Theta'} \operatorname{opt}(\Theta'))
\end{equation}

We can now derive the following:
\begin{align}
\label{eq:filter_sub}
    \begin{split}
    \theta(s, d)  &= \operatorname{opt}(\Theta(s, d)) \\
    &= \operatorname{opt} \left(
    \cup_{f\in\mathcal{F}} \Theta_f(s, d)
    \right) \\
     &= \operatorname{opt} \left(
    \cup_{f\in\mathcal{F}} 
        \operatorname{opt} \left(
        \Theta_f(s, d)
        \right) 
    \right) 
    \end{split}
\end{align}


Because of Eq.~\eqref{eq:filter_sub}, in order to prove Eq.~\eqref{eq:prf_induct_goal}, we only need to show the following:
\begin{equation}
\label{eq:prf_induct_goal_f}
    \theta_f(s, d)  
    = \operatorname{opt}\left(\operatorname{merge} \left(
    T(t(s, f), d-1), 
    T(t(s, \bar{f}), d-1), s, f
    \right), s \right)
\end{equation}
Since the induction step assumes $T$ computes optimal trees for $d-1$, we may here substitute with Eq.~\eqref{eq:prf_indct_assum}:
\begin{equation}
\label{eq:prf_induct_goal_f2}
    \theta_f(s, d)  
    = \operatorname{opt}\left(\operatorname{merge} \left(
     \theta(t(s, f), d-1),
     \theta(t(s, \bar{f}), d-1), s, f
     \right), s \right)
\end{equation}
Because the optimization task is separable, it is splittable, and therefore Eq.~\eqref{eq:prf_induct_goal_f} holds.

Therefore the induction step also holds. Since this proof is shown without any further assumptions, by mathematical induction Theorem~\ref{th:sep_opt} is true.
\end{proof}

\subsection{Threshold constraints}
\label{app:threshold_constraint}
Many constraints, such as capacity constraints, can be considered as an objective with a threshold value $\beta$:
\begin{equation}
    \label{eq:threshold}
    c_{\beta}(v, s) = \boldone(v \succ \beta)
\end{equation}
Eq.~\eqref{eq:threshold} means that a solution is feasible if its solution value is better ($\succ$) than the threshold. E.g., when minimizing, any solution with a value lower than the bound is feasible.

Constraints need to be anti-monotonic to be separable. Threshold constraints are anti-monotonic provided that the combining operator $\oplus$  is worsening:
\begin{definition}(Worsening operator)
A combining operator $\oplus$ is called \emph{worsening} if and only if $v = v_1 \oplus v_2 \rightarrow v_1 \succeq v \land v_2 \succeq v$.
\end{definition}
For example, when setting prices to maximize revenue under a maximum supply constraint, you can calculate the expected demand in each node, sum demand from several nodes by using addition, and then prune trees that exceed this maximum supply by using a threshold constraint.

\begin{proposition}
\label{prop:anti_monotonic}
Given an optimization task $o = \langle g, t, {\succ}, {\oplus}, c_{\beta}, s_0 \rangle$, if the combining operator $\oplus$ satisfies the \emph{worsening} property, then the threshold constraint $c_{\beta}$ is \emph{anti-monotonic}.
\end{proposition}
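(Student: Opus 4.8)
The plan is to prove anti-monotonicity by showing that infeasibility of a subtree solution under the threshold constraint propagates directly to the merged solution, so that it is removed by $\operatorname{feas}$ and hence never appears in $\opt{\Theta, s}$. Concretely, I would verify Def.~\ref{def:anti_monotonic} in its sufficient form: whenever $\neg c_\beta(v_1, s_1)$ or $\neg c_\beta(v_2, s_2)$, the combined value $v = v_1 \oplus v_2 \oplus g(s,f)$ contained in $\Theta = \merge{\Theta_1, \Theta_2, s, f}$ also satisfies $\neg c_\beta(v, s)$, which immediately gives $v \notin \feas{\Theta, s}$ and therefore $v \notin \opt{\Theta, s}$.

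The central argument is a short proof by contradiction using the two hypotheses of the proposition. Assume without loss of generality that $\neg c_\beta(v_1, s_1)$; by the definition of the threshold constraint in Eq.~\eqref{eq:threshold} this reads $\neg(v_1 \succ \beta)$. Since $\oplus$ is worsening, applying the worsening property to $v_1 \oplus v_2$ and then again to its combination with $g(s,f)$, and chaining the resulting inequalities, yields $v_1 \succeq v$. I would then suppose for contradiction that $v$ is feasible, i.e., $v \succ \beta$. Combining $v_1 \succeq v$ with $v \succ \beta$ through transitivity of $\succ$ forces $v_1 \succ \beta$, contradicting $\neg(v_1 \succ \beta)$. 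Hence $\neg(v \succ \beta)$, so $v$ is infeasible and cannot survive $\operatorname{feas}$. The case $\neg c_\beta(v_2, s_2)$ is entirely symmetric, and together these establish the conclusion of Def.~\ref{def:anti_monotonic}.

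I expect the only real subtlety to be the careful bookkeeping between the strict comparator $\succ$ and its reflexive closure $\succeq$. The worsening property is stated with $\succeq$, whereas feasibility is phrased with strict $\succ$, so the transitivity step must be spelled out by a small case analysis: either $v_1 = v$, which gives the contradiction immediately, or $v_1 \succ v$, where transitivity of $\succ$ applies directly. A secondary point is that the merge combines three values, $v_1 \oplus v_2 \oplus g(s,f)$, rather than the two written in Def.~\ref{def:anti_monotonic}; I would note explicitly that the worsening property is preserved under this extra application of $\oplus$, so that $v_1 \succeq v$ continues to hold. Everything else is routine.
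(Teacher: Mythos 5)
Your proof is correct, and its core is the same as the paper's: the worsening property yields $v_1 \succeq v$, and transitivity of $\succ$ propagates the threshold violation from the subtree value $v_1$ to the merged value $v$, so $v$ is killed by $\operatorname{feas}$ and never enters $\opt{\Theta, s}$. The paper's own proof is a one-line direct chain: it reads infeasibility of $v_1$ as $\beta \succ v_1$, chains $\beta \succ v_1 \succeq v$, and concludes $v \notin \theta$. Your contrapositive formulation is tighter in three places where the paper is loose. First, infeasibility under Eq.~\eqref{eq:threshold} is literally $\neg(v_1 \succ \beta)$, which for a partial order is strictly weaker than $\beta \succ v_1$ (the two values may be incomparable); since $\succ$ in this framework is Pareto dominance and may be only partially defined, this case matters, and your argument --- assume $v \succ \beta$, chain through $v_1 \succeq v$ to obtain $v_1 \succ \beta$, contradiction --- needs only the literal negation and therefore covers incomparability, which the paper's chain does not. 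Second, you account for the branching cost term $g(s,f)$ in $\merge{\Theta_1, \Theta_2, s, f}$ by applying worsening twice, whereas the paper silently writes $v = v_1 \oplus v_2$. Third, you spell out the $v_1 = v$ versus $v_1 \succ v$ case split needed to combine the non-strict $\succeq$ with the strict $\succ$, which the paper leaves implicit (the paper also implicitly invokes asymmetry of $\succ$ to turn $\beta \succ v$ into infeasibility, a step your version avoids entirely). Each point is minor on its own, but together they make your proof the more rigorous one for the statement as actually written.
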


\begin{proof}
Given any $v_1 \in \Theta_1, v_2 \in \Theta_2$ with $\beta \succ v_1$ (or w.l.o.g., $\beta \succ v_2$),
such that $v_1 \notin \theta_1$.
Let $v = v_1 \oplus v_2$ with $\oplus$ worsening, then $\beta \succ v_1 \succeq v$ and therefore $v \notin \theta$, and therefore $c_{\beta}$ is anti-monotonic.
\end{proof}


\subsection{Combining multiple optimization tasks}
\label{app:multi_sep}
We now show that a combination of separable optimization tasks is also separable. This is important for multi-objective optimization or for optimizing with a constraint. For example, we can maximize revenue in the first task, while respecting a maximum supply constraint in the second task. Let $o_a = \langle {g_a}, {t_a}, {\succ_a}, {\oplus_a}, {c_a}, {s_0^a} \rangle$ and $o_b = \langle {g_b}, {t_b}, {\succ_b}, {\oplus_b}, {c_b}, {s_0^b} \rangle$ be two separable optimization tasks. Two (or more) of such tasks can now be combined with the following equations:
\begingroup
\allowdisplaybreaks
\begin{align}
\label{eq:g_mult}
    g(\langle s_a, s_b \rangle, \hat{k}) &= \langle g_a(s_a, \hat{k}), g_b(s_b, \hat{k}) \rangle \\
\label{eq:gf_mult}
    g(\langle s_a, s_b \rangle, f) &= \langle g_a(s_a, f), g_b(s_b, f) \rangle \\
    \label{eq:t_mult}
    t(\langle s_a, s_b \rangle, f) &= \langle t_a(s_a, f), t_b(s_b, f) \rangle \\
\label{eq:succ_mult}
\langle v_1^a, v_1^b \rangle \succ \langle v_2^a, v_2^b \rangle &\text{~iff~} 
    \langle v_1^a, v_1^b \rangle \neq \langle v_2^a, v_2^b \rangle 
    \land~ v_1^a \succeq v_2^a \land v_1^b \succeq v_2^b
\\
\label{eq:oplus_mult}
     \langle v_1^a, v_1^b \rangle \oplus \langle v_2^a, v_2^b \rangle &= 
     \langle v_1^a \oplus_a v_2^a , v_1^b \oplus_b v_2^b \rangle \\
    \label{eq:c_mult}
    c(\langle v_a ,v_b \rangle, \langle s_a, s_b \rangle ) &= \boldone(c_a(v_a, s_a) \land c_b(v_b, s_b)) \\
    \label{eq:s0_mult}
    s_0 &= \langle s_0^a, s_0^b \rangle
\end{align}%
\endgroup%
Eqs.~\eqref{eq:g_mult}-\eqref{eq:gf_mult} define the cost function as returning a tuple of the values that $g_a$ and $g_b$ return. 
Eq.~\eqref{eq:t_mult} returns the result of both transition functions.
Eq.~\eqref{eq:succ_mult} gives a partially defined comparison operator which states that a solution only dominates another solution if the solution values for both optimization tasks dominate both values of another solution. Eq.~\eqref{eq:oplus_mult} defines the combination operator by applying the combination operator of the sub-tasks to the corresponding values of the left and right-hand side.
Eq.~\eqref{eq:c_mult} defines a new constraint that is only feasible when the solution is feasible according to both sub-constraints.
Finally, Eq.~\eqref{eq:s0_mult} sets the starting state as the combination of both starting states.

\begin{proposition}
\label{prop:multi_sep}
Let $o_a = \langle g_a, t_a, \succ_a, \oplus_a, c_a, s_0^a \rangle$ and $o_b = \langle g_b, t_b, \succ_b, \oplus_b, c_b, s_0^b \rangle$ be two separable optimization tasks. Then the combined optimization task $\langle g, t, \succ, \oplus, c, s_0 \rangle$ defined by Eqs.~\eqref{eq:g_mult}-\eqref{eq:s0_mult} is also separable.
\end{proposition}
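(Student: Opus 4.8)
The plan is to prove \emph{Prop.~\ref{prop:multi_sep}} by appealing to Theorem~\ref{th:sep}: the combined task $\langle g, t, \succ, \oplus, c, s_0\rangle$ is separable exactly when its cost and transition functions are Markovian, its combining operator preserves order over its comparison operator, and its constraint is anti-monotonic, so I would verify these three conditions one at a time for the combined task, using that each already holds for $o_a$ and $o_b$. The Markovian requirement is immediate: by Eqs.~\eqref{eq:g_mult}--\eqref{eq:t_mult} the combined $g$ and $t$ act coordinatewise, returning $\langle g_a, g_b\rangle$ and $\langle t_a, t_b\rangle$, so their outputs depend only on the combined state $\langle s_a, s_b\rangle$ and the chosen action; since $g_a, t_a$ and $g_b, t_b$ are Markovian, so are $g$ and $t$.

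The substantial step is order preservation (Def.~\ref{def:order_preservation}) for the combined operator $\oplus$ over the \emph{product} comparator $\succ$ of Eq.~\eqref{eq:succ_mult}. First I would extract from Def.~\ref{def:order_preservation} the pointwise monotone form for each subtask, namely $x \succ_a y \Rightarrow x \oplus_a z \succ_a y \oplus_a z$ for comparable feasible $x, y, z$ (and symmetrically for $o_b$). This follows from the definition itself by instantiating $\Theta_1 = \{x, y\}$ and $\Theta_2 = \{z\}$, which makes $x$ and $z$ trivially optimal in their sets and thereby discharges the optimality preconditions of Def.~\ref{def:order_preservation}. Then, given a combined $v_1 \succ w$ with $v_1 = \langle v_1^a, v_1^b\rangle$, $w = \langle w^a, w^b\rangle$, and a combined-optimal $v_2 = \langle v_2^a, v_2^b\rangle$, Eq.~\eqref{eq:succ_mult} gives $v_1^a \succeq_a w^a$ and $v_1^b \succeq_b w^b$ with the tuples distinct. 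I would apply the monotone form in each coordinate where the dominance is strict, and use that $\oplus_a, \oplus_b$ are functions where it is an equality, to conclude via Eq.~\eqref{eq:oplus_mult} that $v_1 \oplus v_2$ weakly dominates $w \oplus v_2$ in both coordinates; the coordinate in which $v_1 \succ w$ was strict remains strict, so the two combined tuples stay distinct, yielding $v_1 \oplus v_2 \succ w \oplus v_2$ in the product order.

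For anti-monotonicity (Def.~\ref{def:anti_monotonic}) of the combined constraint $c$ of Eq.~\eqref{eq:c_mult}, I would use the upward-violation characterization stated before Def.~\ref{def:anti_monotonic}: a constraint is anti-monotonic iff a violation in a subtree forces a violation in every enclosing tree. Because $c$ is the conjunction $c_a \land c_b$, if $\neg c(v_1)$ then some coordinate constraint, say $c_a$, is violated at $v_1^a$; anti-monotonicity of $c_a$ propagates this violation to $v_1^a \oplus_a v_2^a$, so $c_a$ fails at the combined value, whence $c = c_a \land c_b$ fails there as well. Thus $v_1 \oplus v_2$ is infeasible, is removed by $\operatorname{feas}$, and so $v_1 \oplus v_2 \notin \opt{\Theta, s}$, exactly as Def.~\ref{def:anti_monotonic} requires.

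I expect the order-preservation step to be the main obstacle, precisely because Eq.~\eqref{eq:succ_mult} makes $\succ$ a genuine partial Pareto order rather than a total one: nondomination of a tuple does not decompose into nondomination of its coordinates, so the subtask arguments cannot simply be run in parallel. The careful bookkeeping of \emph{strict} versus \emph{weak} (reflexive) dominance across coordinates, together with the singleton-set device that frees the subtask monotonicity from its optimality preconditions, is what makes the lifting go through. Having verified all three conditions, Theorem~\ref{th:sep} then yields that the combined task is separable, proving Prop.~\ref{prop:multi_sep}.
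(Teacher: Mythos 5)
Your proposal is correct and takes essentially the same route as the paper's proof: verify the three conditions of Theorem~\ref{th:sep} for the combined task, with the Markovian and anti-monotonicity parts argued coordinatewise and order preservation handled by exactly the same strict/weak case split on the two coordinates of Eq.~\eqref{eq:succ_mult}. Your singleton-set device ($\Theta_1=\{x,y\}$, $\Theta_2=\{z\}$) is a welcome refinement that justifies applying the subtasks' order preservation pointwise --- a step the paper performs without discharging the optimality preconditions of Def.~\ref{def:order_preservation} --- and the only item you omit is the paper's closing remark that, for non-commutative $\oplus$, the argument must be repeated with the operands of every $\oplus$ swapped.
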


To prove Prop.~\ref{prop:multi_sep} it is necessary to show that the optimization task resulting from applying Eqs.~\eqref{eq:g_mult}-\eqref{eq:s0_mult} satisfies all the properties of being a separable optimization task. 

\begin{proof}
Given two separable optimization tasks $o_a = \langle g_a, t_a, {\succ_a}, {\oplus_a}, c_a, s_0^a \rangle$ and $o_b = \langle g_b, t_b, {\succ_b}, {\oplus_b}, c_b, s_0^b \rangle$, and an optimization task $o = \langle g, t, \succ, \oplus, c, s_0 \rangle$ which is the result of applying Eqs.~\eqref{eq:g_mult}-\eqref{eq:s0_mult} to $o_a$ and $o_b$.

\emph{Markovian: }~ If $g_a, g_b, t_a$ and $t_b$ are Markovian, then $g$ and $t$ are also Markovian since their value can be fully expressed by calling the functions of the subtasks, which are Markovian, without considering any other decision variables.

\emph{Order preservation: }~ If $\oplus_a$ preserves order over $\succ_a$ and $\oplus_b$ preserves order over $\succ_b$, then $\oplus$ also preserves order over $\succ$, because of the following.

Consider solutions $\langle v^a_1, v^b_1 \rangle \in \opt{\Theta_1, s_1}$, $\langle v^{a'}_1, v^{b'}_1 \rangle \in \Theta_1$, so that $\langle v^a_1, v^b_1 \rangle \succ \langle v^{a'}_1, v^{b'}_1 \rangle$, consider $\langle v^a_2, v^b_2 \rangle \in \opt{\Theta_2, s_2}$ and assume that $\oplus$ is commutative.
Then, because of Eq.~\eqref{eq:succ_mult}, there are two options: either $v_1^a \succ_a v_1^{a'} \land v_1^b \succeq_b v_1^{b'}$, or $v_1^a \succeq_a v_1^{a'} \land v_1^b \succ_b v_1^{b'}$. W.l.o.g., choose the first.
Since $\oplus_a$ preserves order over $\succ_a$ and $\oplus_b$ preserves order over $\succ_b$, we can now say that $v_1^a \oplus_a v_2^a \succ_a v_1^{a'} \oplus_a v_2^a$ and also $v_1^b \oplus_b v_2^b \succeq_b v_1^{b'} \oplus_b v_2^b$. Because of Eqs.~\eqref{eq:succ_mult}-\eqref{eq:oplus_mult}, it follows:
\begin{equation}
    \langle v^a_1, v^b_1 \rangle \oplus \langle v^a_2, v^b_2 \rangle \succ \langle v^{a'}_1, v^{b'}_1 \rangle \oplus \langle v^a_2, v^b_2 \rangle
\end{equation}
Therefore $\oplus$ also preserves order over $\succ$.

If $\oplus$ is not commutative, the proof above should be repeated with the operands for every use of $\oplus$ switched around.

\emph{Anti-monotonicity: }~ If $c_a$ and $c_b$ are both anti-monotonic, then for any given solution $v_1 = \langle v_1^a, v_1^b \rangle$ and $v_2$, and $v = \langle v^a, v^b \rangle = v_1 \oplus v_2$, and any given state $s$ and branching decision $f$, such that $s_1 = t(s, f)$ and $s_2 = t(s, \bar{f})$: if $\neg c_a(v_1^a, s_1)$ (or similarly for $c_b$ and $v_2$ and $s_2$), then also $\neg c_a(v^a, s)$, and therefore $\neg c(v, s)$, thus $c$ is also anti-monotonic. 

\emph{Conclusion}~ Since the combined optimization task $o$ satisfies all necessary properties, $o$ is also separable.
\end{proof}

\section{Pseudo-code and Implementation}
\label{app:pseudocode}
This section gives further details on the implementation of \dtc. First, we explain how upper and lower bounding works. Second, we expand on how to search for sparse trees and how to tune the number of nodes. Third, we provide conditions for caching. Fourth, pseudo-code for \dtc{} is given. Finally, a special solver for trees of depth two is discussed.

\paragraph{Upper bounds}
A set of solutions $\mathrm{ub}$ is an upper bound to another set $\Theta$ if for all solution $v$ in $\Theta$ no solution in $\mathrm{ub}$ exists that is equal or dominates it:
\begin{equation}
    \forall v\in\Theta : \neg\exists v'\in\mathrm{ub} : v' \succeq v
\end{equation}
An upper bound can be used to prune solutions that are dominated by it. The upper bound $\mathrm{ub}$ dominates a solution $v$ if there exists a solution $v' \in \mathrm{ub}$ that is equal to or dominates~$v$:
\begin{equation}
    \mathrm{ub} \leq v  \Leftrightarrow \exists v' \in \mathrm{ub} : v' \succeq v
\end{equation}
This upper bound is used to prune poor solutions earlier in the search. To signify this we introduce the following new notation:
\begin{equation}
    \operatorname{opt}(\Theta, s, \mathrm{ub}) = \opt{\{v \in \Theta ~|~ \mathrm{ub} \nleq v \}, s}
\end{equation}

\paragraph{Subtracting} In a branching node, when solutions $\Theta_1$ are known for one subtree, these can be used to update the upper bounds for the other subtree, provided that an appropriate \emph{subtraction} operator $\ominus$ exists.
For totally ordered optimization tasks, the upper bound for the other subtree can be simply computed by subtracting the solution from the current upper bound.
However, for partially ordered objectives, the procedure is more complicated. We here provide the method for element-wise additive objectives and leave the approach for more general objectives as future work.

\begin{figure}[t]
    \centering
    \includegraphics{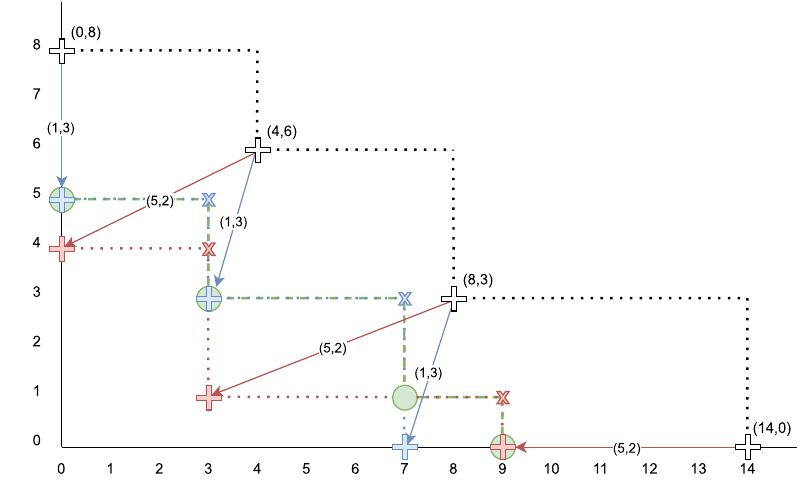}
    \caption{The result of subtracting the left solutions $\{(1,3), (5,2)\}$ from the upper bound $\{(0,8), (4,6),(8,3), (14,0) \}$. Subtracting the solution $(1,3)$ results in the blue Pareto front $\{ (0,5), (3,3), (7,0) \}$. Subtracting the solution $(5,2)$ results in the red Pareto front $\{ (0,4), (3,1), (9,0) \}$. Combining these two Pareto fronts results in the green Pareto front $\{ (0,5), (3,3), (7,1), (9,0) \}$.}
    \label{fig:pareto_ub_subtract_example}
\end{figure}

Consider the example shown in Fig.~\ref{fig:pareto_ub_subtract_example} where the goal is to minimize both dimensions, and the subtraction $\ominus$ is defined as element-wise subtraction (but values below zero are set to zero).
The idea is to find first for each solution $v \in \Theta_1$ the upper bound $\mathrm{ub}_v$ when $v$ is subtracted from the upper bound $\mathrm{ub}$ (the blue and red plusses in Fig.~\ref{fig:pareto_ub_subtract_example}).
\begin{equation}
\mathrm{ub}_v = \operatorname{nondom}(\{ u \ominus v ~|~ u \in \operatorname{ub} \} )
\end{equation}
The final new upper bound $\mathrm{ub}_2$ is the Pareto front that covers precisely the outer edge of all upper bounds $\mathrm{ub}_v$, such that if any solution is dominated by all $\mathrm{ub}_v$, it is dominated by $\mathrm{ub}_2$.

To compute this new upper bound, we need to define some more functions and sets. Let $\prec$ decide if one solution is \emph{reverse Pareto dominated}. Similarly, let $\overline{\operatorname{nondom}}$ be the set of \emph{reverse nondominated} solutions. Let the operator $\bigtriangleup$ combine two solutions $v_1$ and $v_2$ such that $v_1 \bigtriangleup v_2 \preceq v_1$ and also $v_1 \bigtriangleup v_2 \preceq v_2$. Similarly, let the operator $\bigtriangledown$ combine two solutions $v_1$ and $v_2$ such that $v_1 \bigtriangledown v_2 \succeq v_1$ and also $v_1 \bigtriangledown v_2 \succeq v_2$. E.g., $(0,5) \bigtriangleup (3,3) = (3,5)$. Finally, let the set $\mathcal{E}$ be the set of \emph{extreme points}, with one extreme point per dimension with the value for that dimension set to the worst case.

For the example domains considered in this paper, all of these operators ($\ominus$, $\prec$, $\bigtriangleup$, and $\bigtriangledown$) can be defined.
If $\oplus$ is (element-wise) addition, then $\ominus$ is  (element-wise) subtraction, and $\prec$ is defined like~$\succ$ but with the comparison reversed. In this case, $\bigtriangleup$ and $\bigtriangledown$ are defined as (element-wise) $\max$ and $\min$ respectively. The extreme points for the non-linear metrics, for example, are $\mathcal{E} = \{ (|\mathcal{D}^+|, 0), (0, |\mathcal{D}^-|) \}$.

With help of the functions above, one can compute the \emph{reverse Pareto front} of $\mathrm{ub}_v$ (the blue and red crosses in Fig.~\ref{fig:pareto_ub_subtract_example}). 
\begin{equation}
    \operatorname{reverse}(\Theta) = \operatorname{nondom}(\{ v_1 \bigtriangleup v_2 ~|~ v_1, v_2 \in \Theta \cup \mathcal{E}, v_1 \neq v_2 \})
\end{equation}
And similarly, to reverse back:
\begin{equation}
    \overline{\operatorname{reverse}}(\Theta) = \overline{\operatorname{nondom}}(\{ v_1 \bigtriangledown v_2 ~|~ v_1, v_2 \in \Theta, v_1 \neq v_2 \})
\end{equation}

The upper bounds $\mathrm{ub}_v$ can be combined using $\overline{\operatorname{nondom}}$:
\begin{equation}
    \mathrm{ub}_2^* = \overline{\operatorname{nondom}} \left( \bigcup_{v \in \Theta_1} \mathrm{ub}_v
    \right)
\end{equation}
However, the set $\mathrm{ub}_2^*$ is not always the best upper bound. E.g., in Fig.~\ref{fig:pareto_ub_subtract_example} it would not include $(7,1)$, and therefore would not consider the solution $(8,2)$ dominated, even though it is.

The quality of the upper bound can be improved by also considering the reverse Pareto fronts. The reverse Pareto fronts of the solutions $v \in \Theta$ can be combined using $\overline{\operatorname{nondom}}$ and then reversed back with $\overline{\operatorname{reverse}}$ as follows:
\begin{equation}
    \mathrm{ub}_2' = \overline{\operatorname{reverse}} \left(
    \overline{\operatorname{nondom}} \left(
    \bigcup_{v\in \Theta_1} \operatorname{reverse}(\mathrm{ub}_v) \right) \right)
\end{equation}

The final upper bound for the other subtree (the green circles in Fig.~\ref{fig:pareto_ub_subtract_example}) is obtained by combining the two sets with $\overline{\operatorname{nondom}}$:
\begin{equation}
    \mathrm{ub}_2 = \mathrm{ub} - \Theta_1 = \overline{\operatorname{nondom}}(\mathrm{ub}_2^* \cup \mathrm{ub}_2')
\end{equation}


\paragraph{Lower bounds}
The set $\mathrm{lb}$ is a lower bound to $\Theta$, if for all solutions $v$ in $\Theta$ there is no solution in $\mathrm{lb}$ which is dominated by $v$:
\begin{equation}
    \forall v\in\Theta : \neg\exists v' \in \mathrm{lb} : v \succ v'
\end{equation}

We compute lower bounds similarly as \citet{demirovic2021nonlinear}. We use the notation $\operatorname{LB}(s, d)$ for retrieving a lower bound from the cache or by computing it by using a similarity bound for a state $s = \langle \mathcal{D}, F \rangle$. Any optimal solution in the cache is also a lower bound. Furthermore, if a tree search node is infeasible (no feasible solution below the upper bound $\mathrm{ub}$ exists), then the upper bound for this node becomes a lower bound that is stored in the cache.

The lower bound may also be computed by using a \emph{similarity bound}, but this only applies when the optimization task
\begin{enumerate}
    \item has a subtraction operator $\ominus$;
    \item does not have constraints;
    \item is \emph{context independent}; and
    \item and the optimization task is \emph{per-instance} (element-wise) additive. 
\end{enumerate} 

\begin{definition}[Context independent]
\label{def:context_independent}
An optimization task is \emph{context independent} if and only if the cost function does not depend on any of the parent nodes' branching decisions. 
\end{definition}

For example, cost-sensitive classification with discounts, as introduced in this paper, is not context independent.

\begin{definition}[Per-instance (element-wise) additive]
An optimization task is \emph{per-instance (element-wise) additive}, if the cost function for label assignments can be written as the sum of the costs from a per-instance cost function $h$:
\begin{equation}
\label{eq:per_instance_additive}
    g(\langle \mathcal{D}, F \rangle, \hat{k}) = \sum_{(x,k)\in\mathcal{D}} h(x,k,\hat{k})
\end{equation}
    
\end{definition}

If all these four conditions are met, the similarity bound can be used. Its definition is generalized from the definition given in \citep{demirovic2021nonlinear}. Instead of subtracting the misclassification score per class from an existing solution, the worst-case value for instances of that class is subtracted by use of $\ominus$. Let $w_k$ be the worst case contribution of a single instance with class $k$ to the objective, and let $n_k$ be the number of instances in a dataset $\mathcal{D}$ that are not part of a dataset for a cached solution $\mathcal{D}^c$: i.e. if $\mathcal{D}_k = \{ (x,k') \in \mathcal{D} ~|~ k'=k\}$, then $n_k = |D_k \setminus D^c_k|$. Now assume each of these $n_k$ instances is misclassified and therefore $w_k$ is added to the objective for each of these instances, then a lower bound can be computed from a cached solution $\Theta^c$ as follows:
\begin{equation}
    \mathrm{lb}(\mathcal{D}) = \{ v \ominus \sum_{k\in \mathcal{K}} n_kw_k ~|~ v \in \Theta^c \}
\end{equation}
In this equation, the sum operator uses $\oplus$ to sum.

Once the lower bound for a search node is computed, it can be compared to the upper bound $\mathrm{ub}$ to check if any possible improvement exists, and if not, the node is pruned from the tree search. This happens whenever $\mathrm{lb} > \mathrm{ub}$, i.e., every element of the upper bound is reverse dominated by at least one element of the lower bound:
\begin{equation}
    \mathrm{lb} > \mathrm{ub} \Leftrightarrow \forall v \in \mathrm{ub} : \exists v' \in \mathrm{lb} : v' \prec v
\end{equation}

Computing these upper and lower bounds can be quite costly since it involves a product operation of two sets. To reduce the computation time for applying the subtraction or to combine two lower bounds for two subtrees, we apply a new procedure in which we first reduce the two sets to a representative subset using $\bigtriangleup$ in such a way that the resulting bounds are still valid. These representative subsets are then combined to get a new valid upper or lower bound. This decreases the computation time of the bound but prunes slightly fewer nodes.

\paragraph{Sparse trees}
Optimal decision trees may overfit on the training data \citep{dietterich1995optimal_overfitting} and therefore typically a sparsity coefficient is introduced that adds a cost for every branching node, as for example done in \citep{bertsimas2017optimal, hu2019sparse, lin2020generalized_sparse}. In the work presented here, we do not consider such a sparsity coefficient, although it could trivially be added. Instead, \dtc{} optimizes directly for a given maximum number of nodes. This number of nodes can then be tuned  (as explained below). In the DP formulation in the main text, this is left out for brevity. For completeness, the DP formulation should be changed by considering the following equation in a branching node ($d>0$), as is similarly proposed in \citep{demirovic2022murtree}:
\begin{equation}
    \operatorname{opt}\left( 
    \textstyle{\bigcup_{f \in \mathcal{F}, i\in [0, n-1]}} \operatorname{merge} \left( 
    T(t(s, f), d-1, i), 
    T(t(s, \bar{f}), d-1, n-i-1), s, f \right) \right)
\end{equation}

With this addition, \dtc{} can be tuned to optimize the number of nodes to prevent overfitting. This is done by using part of the training data as validation data and train \dtc{} on the rest of the training data for an increasing number of nodes. This is repeated five times and the number of nodes that yields on average the best result is used as the final maximum number of nodes.

\paragraph{Caching} \dtc{} supports two forms of caching: branch caching and dataset caching. Dataset caching is typically slightly faster than branch caching \citep{demirovic2022murtree}. However, dataset caching cannot be used for context-dependent optimization tasks, because dataset similarity is no longer sufficient to identify similar subproblems. Similarly, for some optimization tasks, branch caching should be replaced by equivalent state caching, which we leave as future work.
In this paper, dataset caching is used, except for cost-sensitive classification and group fairness, which are context dependent and use branch caching.

\paragraph{Pseudo-code}
Pseudo-code for \dtc{} is given in Algorithm~\ref{alg:dtc}. If a leaf node is reached, the cost function is used to compute optimal solutions. The next two checks in the pseudo-code check if the maximum number of nodes and the maximum depth are compatible.  If they are, the cache is checked for optimal solutions or a lower bound. The search is stopped if the cached lower bound is dominated by the current upper bound. If the cached solution is optimal, it is returned. 

\begin{algorithm}[tb!]
\caption{Tree search of depth $d$ for an optimization task $\langle g, t, \succ, \oplus, c, s_0 \rangle$ for state $s$, for a feature set $\mathcal{F}$ and a maximum number of nodes $n$.}
\label{alg:dtc}
\DontPrintSemicolon
$T( s, d, n, \mathrm{ub})$\;
\Indp
\If{$d = 0 \lor n = 0$} {
    \Return $\opt{ \cup_{\hat{k}\in\mathcal{K}} g( s, \hat{k}) , s, \operatorname{ub}}$\;
}
\If{$n > 2^d-1$} {
\Return $T( s, d, 2^d-1, \mathrm{ub})$
}
\If{$d > n$} {
\Return $T ( s, d, d, \mathrm{ub})$
}
$\langle \Theta, \mathrm{lb}, \mathrm{stat} \rangle \leftarrow \mathrm{cache}[s, d, n]$ \;
\lIf{$\mathrm{lb} > \mathrm{ub}$}{\Return $\emptyset$} 
\lIf{$\mathrm{stat} = \mathrm{optimal}$}{
    \Return $\opt{\Theta, s, \mathrm{ub}}$
}
$\Theta \leftarrow \emptyset$ \;
\For{$f \in \mathcal{F}, n_L \in [0, n-1]$} {
    $n_R \leftarrow n - n_L-1$\;
    $\mathrm{lb}_L \leftarrow \mathrm{LB}(t(s, \bar{f}), d-1, n_L)$\;
    $\mathrm{lb}_R \leftarrow \mathrm{LB}(t(s, f), d-1, n_R)$\;
    $\mathrm{lb} \leftarrow \operatorname{merge}(\mathrm{lb}_L,\mathrm{lb}_R, s, f)$\;
    \lIf{$\mathrm{lb} > \mathrm{ub}$}{ \Continue}
    $\mathrm{ub}_L = \mathrm{ub} \ominus g(s, f)$\;
    $\Theta_L \leftarrow T(t(s, \bar{f}), d-1, n_L, \mathrm{ub}_L)$ \;
    \lIf{$\Theta_L = \emptyset$} \Continue
    $\mathrm{ub}_R = \mathrm{ub} \ominus \Theta_L \ominus g(s, f)$\;
    $\Theta_R \leftarrow T(t(s, f), d-1, n_R, \mathrm{ub}_R)$ \;
    \lIf{$\Theta_R = \emptyset$} \Continue
    $\Theta_{\mathrm{new}} \leftarrow \operatorname{opt}(\operatorname{merge}(\Theta_L, \Theta_R, s, f), s)$\;
    $\mathrm{ub} \leftarrow \operatorname{opt}(\mathrm{ub} \cup \Theta_{\mathrm{new}}, s)$\;
    $\Theta \leftarrow \operatorname{opt}(\Theta \cup \Theta_{\mathrm{new}}, s, \mathrm{ub})$\;
}
\If{$\Theta = \emptyset$}{
     $\mathrm{cache}[s, d, n] \leftarrow \langle \emptyset, \mathrm{ub}, \mathrm{lower~bound} \rangle $\; 
     \Return $\emptyset$ \;
}
$\mathrm{cache}[s, d, n] \leftarrow \langle \Theta, \Theta, \mathrm{optimal} \rangle $\;
\Return $\Theta$\;
\end{algorithm}

Then, for every possible branching feature and every possible node division, the algorithm does the following. It computes lower bounds for the left and right subtree, merges these lower bounds, and skips this feature if the upper bound dominates the lower bound. Then a recursive call is performed to compute optimal solutions to the left subtree. The upper bound for the left tree is updated by subtracting the branching costs. Solutions from this recursive call are used to update the upper bound for the right subtree and the right subtree is also solved. Solutions from both subtrees are merged into a new set of solutions which are used to update the current set of solutions and the upper bound.

When all branching features have been checked and no feasible solution is found (better than the upper bound), the current upper bound is stored as a lower bound. If feasible solutions have been found, the current set of solutions is stored in the cache and the solutions are returned.

In case the comparison operator $\succ$ is fully defined (a \emph{total ordering} of all solutions exists), such as is the case with minimizing misclassification score, then the $\operatorname{nondom}$ operation always returns a set of precisely one solution (or zero if infeasible). In this case, $\operatorname{nondom}$ can be replaced by $\min$ (as defined by $\succ$), simplifying the implementation and computation.

\paragraph{Depth-two solver}
\citet{demirovic2022murtree} have shown that the runtime of optimal decision tree search can be improved considerably by use of a special solver for trees of maximum depth two. The core idea is to pre-compute class frequencies by looping over only the \emph{present features} for every instance in a dataset. We generalize this concept in \dtc{} and explain it below.

\dtc's use of the depth-two solver is restricted under the following conditions:
\begin{enumerate}[topsep=0pt,itemsep=-1ex,partopsep=1ex,parsep=1ex]
    \item the cost of a leaf node can be expressed as a function over the contributions of individual instances;
    \item the cost of a leaf node is \emph{context independent} (see Def.~\ref{def:context_independent}, note that the branching costs may be context dependent); and
    \item the branching costs are equal for datasets with the same size, i.e., $|\mathcal{D}_1| = |\mathcal{D}_2| \rightarrow g(\langle \mathcal{D}_1, F \rangle, f) = g(\langle \mathcal{D}_2, F \rangle, f)$ for all $f \in \mathcal{F} \times \{0, 1\}$. 
\end{enumerate}
For all optimization tasks considered in this paper, these conditions are met.

To use the depth-two solver, the following functions and values need to be defined:
\begin{enumerate}[topsep=0pt,itemsep=-1ex,partopsep=1ex,parsep=1ex]
    \item a `depth-two' solution space $\mathcal{W}$;
    \item an `empty' initial solution value $w_0 \in \mathcal{W}$ (typically zero), the solution value of a leaf with zero instances;
    \item a per-instance cost function $j : \mathcal{X} \times \mathcal{K} \times \mathcal{K} \rightarrow \mathcal{W}$ such that $j(x, k, \hat{k})$ returns the `depth-two' costs of instance $(x, k)$ when assigned label $\hat{k}$;
    \item a `depth-two' combining operator $\oplus' : \mathcal{W} \times \mathcal{W} \rightarrow \mathcal{W}$ that combines two solution values into one;
    \item a `depth-two' subtract operator $\ominus' : \mathcal{W} \times \mathcal{W} \rightarrow \mathcal{W}$, such that $(w_1 \oplus' w_2) \ominus' w_2 = w_1$ for all $w_1, w_2 \in \mathcal{W}$;
    \item a transformation function $q : \mathcal{W} \rightarrow \mathcal{V}$ that transforms a `depth-two' solution value into a solution value in the original solution value space $\mathcal{V}$;
    \item a branching cost function $r : \mathcal{S} \times \mathbb{N}^+ \times \mathcal{F} \times \{0, 1\} \rightarrow \mathcal{V}$, such that $r(s, n, f)$ returns the cost of branching on $f$ in state $s$ with $n = |\mathcal{D}|$.
\end{enumerate}

In this paper, for all optimization tasks $\mathcal{W} = \mathcal{V}$, $\oplus' = \oplus$, and $j(x, k, \hat{k}) = g(\langle \{ (x, k) \}, \emptyset \rangle, \hat{k})$,
but this is not necessarily true for future optimization tasks. 

The function $j$ provides a per-instance breakdown of the costs which are summed with the $\oplus'$ operator.
These are used to precompute the costs of all possible leaf nodes. Let $n(f_i)$ and $n(f_i, f_j)$ be the size of $\mathcal{D}_{f_i}$ and $\mathcal{D}_{f_i, f_j}$ respectively.
Similarly, let $w(\hat{k})$ be the cost of assigning all instances in $\mathcal{D}$ label $\hat{k}$, and let $w(\hat{k}, f_i)$ and $w(\hat{k}, f_i, f_j)$ be the cost of assigning all instances in $\mathcal{D}_{f_i}$ and $\mathcal{D}_{f_i, f_j}$ respectively.
By generalizing the result from \citet{demirovic2022murtree}, the following statements hold:
\begin{align}
    n(\bar{f_i}) &= |\mathcal{D}| - n(f_i) \\
    n(f_i, \bar{f_j}) &= n(f_i) - n(f_i, f_j) \\
    n(\bar{f_i}, f_j) &= n(f_j) - n(f_i, f_j) \\
    n(\bar{f_i}, \bar{f_j}) &= |\mathcal{D}| - n(f_i) - n(f_j) + n(f_i, f_j) \\
    w(\hat{k}, \bar{f_i}) &= w(\hat{k}) \ominus' w(\hat{k}, f_i) \\
    w(\hat{k}, f_i, \bar{f_j}) &= w(\hat{k}, f_i) \ominus' w(\hat{k}, f_i, f_j) \\
    w(\hat{k}, \bar{f_i}, f_j) &= w(\hat{k}, f_j) \ominus' w(\hat{k}, f_i, f_j) \\
    w(\hat{k}, \bar{f_i}, \bar{f_j}) &= w(\hat{k}) \ominus' w(\hat{k}, f_i) \ominus' w(\hat{k}, f_j) \oplus' w(\hat{k}, f_i, f_j) 
\end{align}

Furthermore, let $f \in x$ denote that feature $f$ is satisfied by $x$. The depth-two solver can now be generalized as shown in Algorithm~\ref{alg:streed_d2}. It first precomputes the dataset sizes and costs, then finds the best left and right solutions for each possible split, and finally returns the best solution.

\begin{algorithm}[tb!]
\caption{Tree search of depth two using a special depth-two solver.}
\label{alg:streed_d2}
\DontPrintSemicolon
$T_{d2}(s, \langle D, F \rangle)$\;
\Indp
\tcp{Pre-compute dataset sizes}
$n(f_i) \leftarrow 0 \quad \forall f_i \in \mathcal{F}$\;
$n(f_i, f_j) \leftarrow 0 \quad \forall f_i, f_j \in \mathcal{F}$ s.t. $i < j$\;
\For{$(x, k) \in \mathcal{D}$} {
    \For{$f_i \in x$} {
        $n(f_i) \leftarrow n(f_i) + 1$\;
        \For{$f_j \in x$} {
            $n(f_i, f_j) \leftarrow n(f_i, f_j) + 1$\;
        }
    }
}
\tcp{Pre-compute costs}
$w(\hat{k}) \leftarrow w_0 \quad \forall \hat{k} \in \mathcal{K}$\;
$w(\hat{k}, f_i) \leftarrow w_0 \quad \forall \hat{k} \in \mathcal{K}, f_i \in \mathcal{F}$\;
$w(\hat{k}, f_i, f_j) \leftarrow w_0 \quad \forall \hat{k}\in \mathcal{K}, f_i, f_j \in \mathcal{F}$ s.t. $i < j$\;
\For{$(x, k) \in \mathcal{D}, \hat{k} \in \mathcal{K}$} {
    $w(\hat{k}) \leftarrow w(\hat{k}) \oplus' j(x, k, \hat{k})$\;
    \For{$f_i \in x$} {
        $w(\hat{k}, f_i) \leftarrow w(\hat{k}, f_i) \oplus' j(x, k, \hat{k})$\;
        \For{$f_j \in x$} {
            $w(\hat{k}, f_i, f_j) \leftarrow w(\hat{k}, f_i, f_j) \oplus' j(x, k, \hat{k})$\;
        }
    }
}
\tcp{Find optimal subtrees}
\For{$f_i, f_j \in \mathcal{F}$ s.t. $f_i \neq f_j$}{
    \For{$\hat{k}_L, \hat{k}_R \in \mathcal{K}$ s.t. $\hat{k}_L \neq \hat{k}_R$} {
        $\operatorname{cost}_L \leftarrow q(w(\hat{k}_L, \bar{f_i}, \bar{f_j}) \oplus q(w(\hat{k}_R, \bar{f_i}, f_j)$\;
        $\operatorname{cost}_L \leftarrow \operatorname{cost}_L \oplus~ r(t(s, \bar{f_i}), n(\bar{f_i}, \bar{f_j}), \bar{f_j}) \oplus r(t(s, \bar{f_i}), n(\bar{f_i}, f_j), f_j)$\;
        $\operatorname{cost}_R \leftarrow q(w(\hat{k}_L, f_i, \bar{f_j}) \oplus q(w(\hat{k}_R, f_i, f_j)$\;
        $\operatorname{cost}_R \leftarrow \operatorname{cost}_R \oplus~ r(t(s, f_i), n(f_i, \bar{f_j}), \bar{f_j}) \oplus r(t(s, f_i), n(f_i, f_j), f_j)$\;
        \If{$\operatorname{cost}_L \succ \operatorname{best-cost}_L(f_i)$} {
            $\operatorname{best-cost}_L(f_i) \leftarrow \operatorname{cost}_L$\;
        }
        \If{$\operatorname{cost}_R \succ \operatorname{best-cost}_R(f_i)$} {
            $\operatorname{best-cost}_R(f_i) \leftarrow \operatorname{cost}_R$\;
        }
    }
}
\Return $\opt{ \bigcup_{f \in \mathcal{F}} \operatorname{merge}(\operatorname{best-cost}_L(f), \operatorname{best-cost}_R(f), s, f) }$\;

\end{algorithm}

\section{Cost-Sensitive Classification}
\label{app:csc}

For our cost-sensitive classification experiment, we consider a constant but asymmetric misclassification costs, test costs (feature costs) that are constant or may depend on previous selected tests. For costs, we follow the naming conventions of \citep{turney2000cost_types}. Misclassification costs or test costs that depend on individual cases or on feature values are easy extensions and also fit in the \dtc{} framework, but are not considered here.

The following sections present the related work, the experiment setup and results, and a discussion of the results. The objective formulation is given in Eq.~\eqref{eq:g_csc} in the main text.

\subsection{Related Work}
\citet{lomax2013survey_costsensitive} provide an overview of methods for cost-sensitive classification with decision trees, most of which use top-down induction with varying cost functions for deciding on the best split~\citep{tan1993sensitive_robotics, nunez1991eg2_cost_sensitive,norton1989idx,ling2006test_cost_sensitive_dt}. Each of these splitting criteria is mostly the same, except for a slight difference in the trade-off between accuracy and feature costs. This therefore is a good example of the drawback of top-down induction by splitting criteria: the objective is not directly translatable to a good splitting criterion. To address this issue \citet{min2012cost_sensitive_cc_id3} generalize a number of those methods using a hyperparameter that decides on the trade-off between information gain and feature costs. 
ICET~\citep{turney1994cost_sensitive} does the same but uses a genetic algorithm to decide the hyperparameter. All of these methods except ICET, however, do not consider misclassification costs, and even ICET only does so indirectly.




\citet{lomax2017cost_sensitive_mab} co-optimize accuracy and costs by approaching the problem as a multi-armed bandit scenario. This means that in every branching node, they repeatedly try random splits (possibly multiple consecutive random splits) and choose the split which on average has the best expected costs based on this random exploration. 

\citet{maliah2021pomdp_cost_sensitive} formulate the problem as a Partially Observable Markov Decision Process (POMDP). However, they observe that the action and state space of their model is too big, and therefore they present a smaller MDP that approximates the POMDP. For the MDP model, they generate trees for all possible subsets of features using a standard heuristic that ignores costs. The MDP's state space is the set of all leaf nodes of all those generated trees. The resulting MDP is acyclic, and therefore its value function can be computed in one bottom-up pass through the belief states.

As far as we know, \citet{nijssen2010dl8_constraints} and \citet{zubek2002mdp_cost_sensitive} are the only ones to provide an optimal formulation. The latter do so by formulating it as an MDP. However, in their experiments, they decide to use a relaxed formulation instead because of memory limitations.

\subsection{Experiment Setup}
\begin{table}[t]
    \centering
    \caption{Datasets used in cost-sensitive classification. $|\mathcal{F}_{disc}|$ is the number of categorical features in the dataset as preprocessed by~\citep{lomax2013thesis_cost_sensitive}. $|\mathcal{F}_{bin}|$ is the number of resulting binary features.}
    \label{tab:csc_datasets}
    \begin{tabular}{lcccc}
         \toprule
         
         Dataset &
         $|\mathcal{D}|$ &
         $|\mathcal{F}_{disc}|$ &
         $|\mathcal{F}_{bin}|$ &
         $|\mathcal{K}|$ 
         \\
         \midrule
         
         Annealing &
         898 & 
         24 & 
         82 & 
         5  
         \\
         
         Breast &
         277 & 
         9 & 
         38 & 
         2  
         \\
         
         Car &
         1728 & 
         6 & 
         21 & 
         4  
         \\
         
         Diabetes &
         768 & 
         6 & 
         11 & 
         2  
         \\
         
         Flare &
         323 & 
         10 & 
         25 & 
         3  
         \\
         
         Glass &
         214 & 
         7 & 
         17 & 
         6  
         \\
         
         Heart &
         297 & 
         11 & 
         20 & 
         2  
         \\
         
         Hepatitis &
         154 & 
         16 & 
         20 & 
         2  
         \\
         
         Iris &
         150 & 
         4 & 
         12 & 
         3  
         \\
         
         Krk &
         28056 & 
         6 & 
         40 & 
         18  
         \\
         
         Mushroom &
         8124 & 
         21 & 
         111 & 
         2  
         \\
         
         Nursery &
         8703 & 
         8 & 
         26 & 
         5  
         \\
         
         Soybean &
         562 & 
         35 & 
         80 & 
         15  
         \\
         
         Tictactoe &
         958 & 
         9 & 
         27 & 
         2  
         \\
         
         Wine &
         178 & 
         13 & 
         32 & 
         3  
         \\
         
         \bottomrule
         
    \end{tabular}
\end{table}
Table~\ref{tab:csc_datasets} lists the datasets used in this experiment \citep{uci2017, dataset_breast_cancer, dataset_heart_cleveland}. These datasets were preprocessed and discretized according to the instructions in~\citep{lomax2013thesis_cost_sensitive}. For methods that require binary features, the discretized features were binarized using one-hot encoding. Every dataset is split randomly 100 times in a train and test set, with 80\% and 20\% of the instances respectively.

We use the fixed feature costs as defined in~\citep{lomax2013thesis_cost_sensitive}. The cost of misclassification strongly impacts the behavior of cost-sensitive algorithms. When misclassification costs are low, feature costs are comparatively high and branching is discouraged. Conversely, when misclassification costs are high, feature costs are comparatively low and therefore less important. In this case, overfitting also becomes more likely. Therefore, similar to the experiment setup in~\citep{lomax2013thesis_cost_sensitive,maliah2021pomdp_cost_sensitive} we vary the misclassification costs and generate misclassification matrices with low, middle, and high misclassification costs. These matrices are defined based on the feature costs and the class frequency as follows: First, calculate the maximum possible feature costs $C$ by summing the costs for all features. The default cost $C_{def}$ for low, middle, and high-cost experiments are defined as $1/6~C$, $1/3~C$, and $C$ respectively. By considering the relative frequency $f_k$ of class $k$, the misclassification costs $C_k$ for class $k$ are defined as follows:
\begin{equation}
    C_k = \frac{C_{def}}{f_k |\mathcal{K} |}
\end{equation}

For each run, a timeout of 300 seconds is used.

\paragraph{Normalized costs} We report normalized costs, as defined in~\citep{turney1994cost_sensitive}. The normalized costs are calculated by dividing the obtained costs by the standard costs. As before, let $C$ be the sum of all feature costs, let $f_k$ be the relative frequency of class $k$, and let $C_{k,\hat{k}}$ be the misclassification cost when an instance with true label $k$ is assigned label $\hat{k}$. Then the standard costs is defined as follows:
\begin{equation}
    C + \min_k{(1-f_k)} \cdot \max_{k,\hat{k}}{C_{k,\hat{k}}}
\end{equation}
These standard costs are lower than the maximum possible costs, but function as a good upper bound on the average costs~\citep{turney1994cost_sensitive}.

In one aspect we deviate from the original definition. In the original definition, $f_k$ was calculated based on the whole dataset, but here it is based on the dataset that is examined (the train or test set).

\subsubsection{Methods}
Apart from \dtc, the following methods are evaluated in our experiments:

\begin{description}
\item[C4.5\textnormal{~\cite{quinlan1993c4.5}}] is an extension of the ID3 algorithm \cite{quinlan1986id3} and uses top-down induction (TDI) based on an information gain measure. After construction, a pruning mechanism is used to remove branching nodes that are likely to result in overfitting. C4.5 is cost-insensitive and therefore functions as a baseline heuristic.

\item[CSID3\textnormal{~\cite{tan1993sensitive_robotics}}] is a variant of ID3 with an updated splitting criterion. Instead of only information gain, it makes a trade-off between the information gain $I(f)$ and the cost $C(f)$ of feature $f$ by the splitting criterion:
\[\frac{I(f)^2}{C(f)}\]
CSID3 does not consider varying misclassification costs.

\item[EG2\textnormal{~\cite{nunez1991eg2_cost_sensitive}}] is also a variant of ID3 with an updated splitting criterion: 
\[ 
\frac{2^{I(f)}-1}{(C(f)+1)^\omega}
\]
The parameter $\omega$, with values $0\leq \omega \leq 1$, determines the weight of the cost factor. Following \citet{turney1994cost_sensitive}, we set $\omega=1$. EG2 also does not consider varying misclassification costs. 

\item[IDX\textnormal{~\cite{norton1989idx}}] is yet another variant of ID3 with the splitting criterion:
\[\frac{I(f)}{C(f)}\]
The look-ahead procedure of IDX is not implemented. IDX also does not consider varying misclassification costs.

\item[MetaCost\textnormal{~\cite{domingos1999metacost}}] can be used to turn any error-based classifier into a cost-sensitive classifier. It iteratively fits a model, computes class probabilities per sample, and updates the training input labels such that the expected costs are minimized based on the class probabilities. Note that MetaCost only considers misclassification costs and ignores feature costs. We run MetaCost with 10 iterations.



\item[TMDP\textnormal{~\cite{maliah2021pomdp_cost_sensitive}}] formulates the problem as a Markov Decision Process (MDP). For every possible subset of features, it uses C4.5 to generate a normal cost-insensitive tree, and every leaf from the resulting tree is added as a state to the MDP. Each state is described by the path to the leaf. The state with the empty path is the starting state. In every state, the possible actions are to measure a feature or to classify a sample. If a feature is measured, the next state is the one described by the path that results from appending the measured feature to the current path. When classifying, the MDP moves to the terminal state. This MDP therefore is acyclic, and the values of the states can be computed in reverse topological order by using the training data. Based on these values, the tree can now be constructed by choosing the action with maximum value, beginning in the start node, and splitting into multiple nodes when selecting a feature to measure. This is repeated until all nodes are leaf nodes. 

Because TMDP creates trees for every possible subset of the feature set, the total number of possible states explodes for an increasing number of features. Therefore, the authors propose that for a larger number of features, TMDP only tests all possible subsets of the 10 most costly features together with all the cheap features considered by default. In the analysis of TMDP here, the same approach is used.

Note that TMDP is the only method that is evaluated here that considers multi-way splits instead of binary splits. 

\end{description}

For these methods, we use an implementation that is available online.\footnote{\url{https://github.com/shanigu/CostSensitive}}
C4.5 is implemented in Weka, with CSID3, EG2, and IDX implemented as C4.5 but with the new splitting criterion. Leaf nodes do not select the majority class, but the class with the lowest total misclassification costs.

\begin{table*}[t!]
    \centering
    \caption{Out-of-sample normalized costs (\%). Significantly ($p < 5\%$) best results are marked in bold. Timeouts are marked as `-'.}
    \begin{tabular}{lccccccc}
    
    \toprule
    
    Dataset &
    C4.5 &
    CSID3 &
    EG2 &
    IDX & 
    MetaCost &
    TMDP & 
    \dtc
    \\
    \midrule
    
    Annealing &
     5.6 & 
     5.6 & 
     2.2 & 
     2.2 & 
     5.7 & 
     1.6 & 
     \textbf{1.2}   
    \\
    
    Breast &
    43.9 & 
    43.4 & 
    41.0 & 
    40.7 & 
    54.8 & 
    \textbf{21.0} & 
    \textbf{20.4}   
    \\
    
    Car &
    43.1 & 
    42.2 & 
    39.8 & 
    39.9 & 
    42.2 & 
    16.1 & 
    \textbf{14.8}   
    \\
    
    Diabetes &
    50.2 & 
    44.5 & 
    43.4 & 
    43.4 & 
    55.9 & 
    \textbf{14.2} & 
    \textbf{14.1}   
    \\
    
    Flare  &
    17.9 & 
    \textbf{15.7} & 
    \textbf{15.4} & 
    \textbf{15.3} & 
    21.8 & 
    18.0 & 
    16.7   
    \\
    
    Glass &
    37.7 & 
    35.6 & 
    30.1 & 
    29.0 & 
    39.4 & 
    \textbf{14.3} & 
    \textbf{14.4}  
    \\
    
    Heart &
    45.4 & 
    23.0 & 
    20.0 & 
    19.9 & 
    44.7 & 
    \textbf{9.5} & 
    \textbf{9.3}  
    \\
    
     Hepatitis &
     32.2 & 
     24.3 & 
     20.0 & 
     19.7 & 
     32.2 & 
     \textbf{16.2} & 
     \textbf{15.6}   
     \\
    
    Iris &
    38.8 & 
    34.0 & 
    29.5 & 
    29.7 & 
    40.9 & 
    \textbf{12.8} & 
    \textbf{12.8}   
    \\
    
    Krk &
    6.5 & 
    6.2 & 
    6.3 & 
    6.3 & 
    6.5 & 
    - & 
    \textbf{1.7}  
    \\
    
    Mushroom  &
     16.4 & 
     8.3 & 
     5.6 & 
     4.5 & 
     16.4 & 
     - & 
     \textbf{1.9} 
     \\
    
    Nursery &
    0.4 & 
    0.4 & 
    0.4 & 
    0.4 & 
    0.4 & 
    0.1 & 
    \textbf{0.1}   
     \\
    
    Soybean &
    12.8 & 
    16.3 & 
    19.8 & 
    21.1 & 
    12.6 & 
    15.4 & 
    \textbf{11.4}   
     \\
    
    Tictactoe &
    45.0 & 
    45.0 & 
    45.0 & 
    45.0 & 
    45.0 & 
    \textbf{16.1} & 
    \textbf{16.1}   
     \\
    
    Wine  &
    25.0 & 
    18.8 & 
    18.1 & 
    17.5 & 
    24.8 & 
    12.2 & 
    \textbf{10.4} 
     \\
    
    \bottomrule
    \end{tabular}
    \label{tab:csc_costs_all_methods}
\end{table*}

\subsection{Results}

\paragraph{Normalized costs}
Table~\ref{tab:csc_costs_all_methods} shows the normalized cost results for all methods on a variety of datasets. On 14 out of 15 datasets \dtc{} has the best performance or is not statistically significantly worse than the method with the best performance. The results show that TMDP and \dtc{} have similar performance and significantly outperform the other heuristics on all datasets except Flare. Therefore, the rest of the analysis will focus on TMDP and \dtc.

Table~\ref{tab:csc_tmdp_vs_dtc} shows the cost results for TMDP and \dtc{} for low, middle, and high misclassification costs. The difference in the results can be explained by several factors: multi-way splits versus binary splits, and the quality of the binarization. TMDP considers multi-way splits and therefore has a larger solution space, but does not necessarily find the best tree in this solution space according to the training data. \dtc, on the other hand, only considers binary decision trees and its solution quality depends on the binarization. However, it always finds the best binary decision tree for the training data. 

\begin{table*}[t]
    \centering
     \caption{Test normalized cost (\%) for varying misclassification costs. Significantly ($p < 5\%$) best results are marked in bold. Timeouts are marked as `-'.}
    \begin{tabular}{lcccccc}
         \toprule
         &
         \multicolumn{2}{c}{Low costs} &
         \multicolumn{2}{c}{Middle costs} &
         \multicolumn{2}{c}{High costs} \\
         \cmidrule(lr){2-3}
         \cmidrule(lr){4-5}
         \cmidrule(lr){6-7}
         
         Dataset &
         \dtc &
         TMDP &
         \dtc &
         TMDP &
         \dtc &
         TMDP \\ \midrule
         
         Annealing &
         \textbf{1.3} & 
         1.6 & 
         \textbf{1.2} & 
         1.6 & 
         \textbf{1.2} & 
         1.5  
         \\
         
         Breast &
         8.3 & 
         8.3 & 
         15.7 & 
         15.6 & 
         \textbf{37.3} & 
         39.1  
         \\
         
         Car &
         9.8 & 
         9.8 & 
         \textbf{14.0} & 
         15.8 & 
         \textbf{20.5} & 
         22.9  
         \\
         
         Diabetes &
         7.3 & 
         7.2 & 
         \textbf{11.6} & 
         11.8 & 
         23.4 & 
         23.5  
         \\
         
         Flare &
         9.9 & 
         9.8 & 
         \textbf{15.2} & 
         16.3 & 
         \textbf{24.9} & 
         28.0  
         \\
         
         Glass &
         10.9 & 
         10.8 & 
         13.8 & 
         14.1 & 
         18.5 & 
         18.1  
         \\
         
         Heart &
         4.0 & 
         4.0 & 
         7.2 & 
         7.5 & 
         16.7 & 
         17.1  
         \\
         
         Hepatitis &
         \textbf{8.4} & 
         9.2 & 
         12.9 & 
         13.5 & 
         25.6 & 
         26.0 
         \\
         
         Iris &
         7.5 & 
         7.5 & 
         10.7 & 
         10.7 & 
         20.2 & 
         20.3  
         \\
         
         Krk &
         \textbf{1.8} & 
         - & 
         \textbf{1.8}  & 
         - & 
         \textbf{1.7}  & 
         -  
         \\
         
         Mushroom &
         \textbf{1.7} & 
         - & 
         \textbf{1.8}  & 
         - & 
         \textbf{2.1}  & 
         -  
         \\
         
         Nursery &
         0.1 & 
         0.1 & 
         0.1  & 
         0.1 & 
         0.1  & 
         0.1  
         \\
         
         Soybean &
         \textbf{10.9} & 
         18.7 & 
         \textbf{11.6}  & 
         16.3 & 
         11.8  & 
         \textbf{11.3}  
         \\
         
         Tictactoe &
         6.8 & 
         6.8 & 
         12.7 & 
         12.7 & 
         28.8  & 
         28.9 
         \\
         
         Wine &
         \textbf{8.0} & 
         11.3 & 
         \textbf{11.9}  & 
         12.5 & 
         \textbf{11.2}  & 
         12.8 
         \\
         
         \bottomrule
    \end{tabular}
    \label{tab:csc_tmdp_vs_dtc}
\end{table*}

\paragraph{Runtime}
Figure~\ref{fig:csc_method_runtime_compare} shows the difference in runtime for \dtc{} (hypertuned with $d=4$) and TMDP. As can be seen, \dtc{} is often an order of magnitude faster than TMDP, and is significantly faster than TMDP for all datasets ($p < 5\%$), except for Iris, the smallest dataset. 
\begin{figure}[h]
    \centering
    \includegraphics[width=0.6\columnwidth]{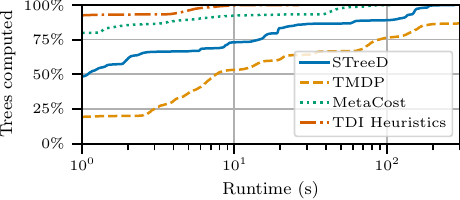}
    \caption{Runtime (s) comparison. The top-down induction heuristics (C4.5, CSID3, EG2, and IDX) have been grouped together as `TDI Heuristics' because they almost do not differ in runtime. A timeout of 300 seconds is used. Note the logarithmic x-axis.}
    \label{fig:csc_method_runtime_compare}
\end{figure}

\paragraph{Interpretability}
Table~\ref{tab:csc_interpretability} shows how the methods compare in terms of interpretability metrics. 
When only looking at the numbers, TMDP has the best score, and \dtc{} follows shortly after. The heuristics, as expected, score much worse. However, TMDP allows for multi-way split trees and the others do not. According to \citet{piltaver2016comprehensible}, a higher branching factor increases the classifying time of an average user, although this effect can only be seen with branching factors higher than 3.
Therefore, we can conclude that both TMDP and \dtc{} clearly outperform the heuristics in terms of interpretability.

\begin{table*}[t!]
    \centering
    \caption{Interpretability metrics. Significantly ($p < 5\%$) best results are marked in bold. Datasets for which TMDP exceeded the timeout limit are ignored.}
    \begin{tabular}{lccccccc}
    
    \toprule
    
     &
    C4.5 &
    CSID3 &
    EG2 &
    IDX & 
    MetaCost &
    TMDP & 
    \dtc
    \\
    \midrule
    
    Tree depth &
     7.9 & 
     7.1 & 
     7.1 & 
     7.1 & 
     7.2 & 
     \textbf{1.9} & 
     2.2  
    \\
    
    Leaf nodes &
    26.1 & 
    27.2 & 
    28.6 & 
    28.7 & 
    24.8 & 
    7.4 & 
    \textbf{5.4} 
    \\
    
    Question length &
    4.4 & 
    4.7 & 
    5.0 & 
    5.0 & 
    4.7 & 
    \textbf{1.7} & 
    2.8   
    \\
    
    \bottomrule
    \end{tabular}
    \label{tab:csc_interpretability}
\end{table*}

\subsection{Discussion}
The results above clearly show the advantage of TMDP and \dtc{} over the heuristics. For most datasets, the cost performance of TMDP and \dtc{} was much better than the heuristics. This is best explained by the fact that the splitting criteria of the heuristics do not directly capture the objective, and in these cases even ignore the misclassification costs.

Comparing TMDP and \dtc{} is more difficult. In terms of costs, both at times perform best, although on average \dtc's out-of-sample costs are (significantly) lower: $13.3\%$ normalized costs versus TMDP's $14.3\%$. Both methods show some overfitting and this could be further addressed. Direct comparison is difficult though, because they differ in many ways: TMDP has categorical data with multi-way splits and seeks a tree without a depth limit. \dtc{} has binary data with binary splits and seeks the best tree up to some maximum size.

The runtime of \dtc{} is at least an order of magnitude lower than TMDP. One of the consequences here is that TMDP was not able to find good trees for several datasets within the given time limit. Moreover, these results are produced with TMDP's limit of considering only subsets for the 10 most costly features. Without this limit, TMDP's runtime would have been worse. \dtc{} therefore is more scalable than TMDP.

Both \dtc{} and TMDP have good interpretability and clearly outperform the heuristics. The trees returned by the heuristics are often too large to be considered interpretable. 

From these results it can be concluded that \dtc{} can successfully be applied to cost-sensitive classification, outperforming the state-of-the-art in cost performance and runtime, while providing small interpretable trees.


\section{Prescriptive Policy Generation}
\label{app:ppg}
Decision trees can also be used to prescribe policies. Given a set of instances, actions, and outcomes, the decision tree prescribes which action for an unseen instance will give the best predicted outcome. An example use case is prescribing medical treatment on basis of historical treatment response.

However, since historical data does not necessarily have the outcome for every possible action, the prescriptive model needs to reason about the counterfactuals. Therefore, conceptually, this problem can be split into a prediction and a prescription problem. The prediction problem predicts the counterfactuals. The prescription problem decides what action to take.

The following sections present first the related work and then a formal problem definition and other preliminaries. We then describe the experiments and conclude with a discussion.

\subsection{Related Work}
\citet{athey2016recursive} use trees to estimate the causal effect of treatments. They use part of the dataset to estimate heterogeneous partitions and the rest is used to estimate treatment effects in the leaves, including confidence intervals for these values. However, this method does not directly assign policies, and in the case of non-binary treatments, several trees need to be combined to produce a policy tree, reducing its interpretability.

\citet{kallus2017prescriptive_trees} combines the prediction and prescription into one learning problem and proposes both a top-down induction heuristic and an optimal MIP method that directly optimizes prescription error. Their methods work under the assumption that the training data is either produced in a randomized experiment or that the generated tree partitions the data such that the historical treatment assignment becomes independent of the individual. However, in practice, the optimal MIP method does not scale to sufficient depth to reach such a partition. Their results therefore also show that the optimal MIP method in most situations performs worse than other solution methods.

\citet{bertsimas2019prescriptive} improve on~\citep{kallus2017prescriptive_trees} by observing that the performance of prescriptive trees relies not only on a good heterogeneous partition but also on the quality of the prediction of outcomes. 
Therefore they propose an optimal MIP formulation that co-optimizes both the prescription and prediction error in one objective. To circumvent the scalability issues for finding the global optimum, they repeatedly use a coordinate descent method~\citep{bertsimas2019book_ml_opt_lens} to find local optima and select the best tree.

\citet{biggs2021revenue_optimization} and \citet{zhou2022multi-action-policy} note the disadvantage of trying to incorporate the predictive aspect into the MIP model. Instead they use a separate predictive teacher model and use this model to guide a top-down greedy heuristic. \citet{zhou2022multi-action-policy} also provide an optimal recursive search method, but they did not incorporate any of the scalability improvements from similar approaches~\citep{aglin2020learning, demirovic2022murtree} such as caching and lower-bounding.

\citet{amram2022optimal_policy_trees} take a similar approach, but instead of a greedy heuristic, they propose to adapt the MIP model of~\citep{bertsimas2019prescriptive} by updating the objective to incorporate the counterfactual data obtained from a predictive model, using objectives from causal inference. Their experiments show its advantage over both the greedy heuristics and the method by~\citet{bertsimas2019prescriptive}.

\citet{jo2021prescriptive_trees} analytically show the limitations of both \citep{kallus2017prescriptive_trees} and~\citep{bertsimas2019prescriptive}: their assumption is generally not satisfied in conditionally randomized experiments. Both methods can return trees that actively violate their own assumption. Therefore \citet{jo2021prescriptive_trees}, like~\citet{amram2022optimal_policy_trees}, propose to drop this assumption and use separate predictive and prescriptive models. 
Apart from this, they show that under mild conditions their method provides an optimal out-of-sample policy when the number of training samples approaches infinity. Their model also allows for the addition of budget constraints.
They propose to use the same objectives as \citet{amram2022optimal_policy_trees}, but their MIP formulation is based on the more scalable formulation of~\citet{aghaei2021strong}, so unlike \citet{bertsimas2019prescriptive} and \citet{amram2022optimal_policy_trees} they do not need to fall back on coordinate descent for local optima. However, despite their scalability improvements, almost half of the instances in their experiments are still not solvable within one hour. 

Finally, \citet{subramanian2022constrained_student_teacher_trees} find near-optimal solutions by using column generation to improve scalability. Furthermore, their model allows for the addition of a variety of constraints.

\subsection{Preliminaries}
We consider a dataset $\mathcal{D}$ containing instances $(x,k,y)$ with $x\in\mathcal{X}$ the feature vector, $k\in\mathcal{K}$ the label or historically assigned treatment, and $y\in\mathbb{R}$ the observed outcome value. We assume no information on the historical treatment assignment policy. Let $Y(x, k)$ be the value outcome when assigning treatment $k$ to an individual described by $x$. The goal now is to find a policy $\pi: \mathcal{X} \rightarrow \mathcal{K}$ that maximizes the expected outcome $Q(\pi) = \mathbb{E}[Y(x, \pi(x))]$.

Because not every treatment was applied to each person, we do not have full information on the values of $Y(x,k)$, and therefore we need a teacher model that provides information on the counterfactuals, based on the data that we do have. Similar to~\citep{jo2021prescriptive_trees,amram2022optimal_policy_trees}, we will here consider three teacher models: the direct method (DM), inverse propensity weighting (IPW), and the doubly robust approach (DR).

\paragraph{Direct method} The direct method (also called Regress and Compare) trains a teacher model $\hat{v}_k$ for the expected outcome $y$ for each possible treatment $k$ by splitting the training data based on the historical treatment. The objective value $Q$ can therefore be formulated as follows:
\begin{equation}
    Q_{\text{DM}}(\mathcal{D}, \pi) = \frac{1}{|\mathcal{D}|} \sum_{(x,k,y)\in\mathcal{D}} \hat{v}_{\pi(x)}(x)
\end{equation}

\paragraph{Inverse propensity weighting} The propensity scores $\mu(x,k) = \mathbb{P}(k~|~x)$ give the probability of a treatment $k$ for a given feature vector $x$. IPW implicitly models the counterfactuals by reweighing the actual data based on the inverse of their propensity scores. This requires the training of a propensity score model $\hat{\mu}(x,k)$. Any ML model can be used to learn $\hat{\mu}$. Given a model for $\hat{\mu}$, the objective value $Q$ of the policy becomes:
\begin{equation}
    Q_{\text{IPW}}(\mathcal{D}, \pi) = \frac{1}{|\mathcal{D}|} \sum_{(x,k,y)\in\mathcal{D}} \frac{\boldone(\pi(x)=k)}{\hat{\mu}(x,k)}
\end{equation}

\paragraph{Doubly robust} Finally, the doubly robust method~\citep{dudik2011doubly_robust} combines IPW and DM as follows:
\begin{equation}
    Q_{\text{DR}}(\mathcal{D}, \pi) = \frac{1}{|\mathcal{D}|} \sum_{(x,k,y)\in\mathcal{D}} \left( \vphantom{\frac{\boldone(\pi(x)=k)}{\hat{\mu}(x,k)}}
    \hat{v}_{\pi(x)}(x) + \right. 
    \left. (y - \hat{v}_k(x)) \frac{\boldone(\pi(x)=k)}{\hat{\mu}(x,k)}
    \right)
\end{equation}

\subsection{Objective Task Formulation}
Once the objective value function $Q$ is known as stated above, this can be directly applied in our framework since each of the objectives $Q_{\text{IPW}}, Q_{\text{DM}}$ and $Q_{\text{DR}}$ are separable, once the teacher model values resulting from $\hat{\mu}$ and $\hat{v}$ have been precomputed.  
For each of these functions, the combining operator is addition, the comparator is $>$ (maximization) and the cost function is determined by the value of the policy $\pi_k$ that assigns treatment $k$ to all individuals:
\begin{equation}
g(\mathcal{D}, k) = Q(\mathcal{D}, \pi_k)
\end{equation}

\subsection{Experiments}
The following experiments compare the performance of \dtc{} with the state-of-the-art optimal methods for prescriptive policy generation. In this analysis, our main consideration is runtime performance, since the other methods optimize precisely the same objective. We examine the methods on synthetic data and on a real-life dataset from the literature.

\begin{description}
\item[Jo-PPG-MIP\textnormal{~\cite{jo2021prescriptive_trees}}] is a MIP model based on a max-flow formulation for optimal decision trees from~\citep{aghaei2021strong} but with the aforementioned $Q$-function as the objective. We call this method Jo-PPG-MIP, after their first author. Their experiments show their advantage over previous MIP methods~\citep{bertsimas2019prescriptive} and~\citep{kallus2017prescriptive_trees}. Our implementation of Jo-PPG-MIP is based on the open-source code for~\citep{aghaei2021strong}, for which we updated the objective.\footnote{\url{https://github.com/pashew94/StrongTree}}
\item[CAIPWL-opt\textnormal{~\cite{zhou2022multi-action-policy}}] also observe the scalability issues with MIP-based methods and therefore propose a recursive tree-search algorithm that maximizes the doubly robust metric. Their method does not use caching, and therefore this method does not qualify as a dynamic programming approach. They suggest their optimal method is best for trees of at most depth three. For deeper trees, they suggest using approximate methods. In our experiments we use their publicly available R-package.\footnote{\url{https://github.com/grf-labs/policytree}}
\end{description}

\subsubsection{Synthetic Data}
For the experiment with synthetic data, we follow the setup as done in~\citep{jo2021prescriptive_trees,athey2016recursive}. We consider two possible treatments $K = \{0,1\}$, input vectors $X$, which consist of $f$~independent variables with distribution $N(0,1)$ and for each $X_i$ corresponding potential outcomes $Y_i(k)$:
\begin{equation}
    Y_i(k) = \phi(X_i) + \frac{1}{2} (2k-1)\cdot \kappa(X_i) + \varepsilon_i.
\end{equation}
In this equation, $\phi(x)$ is the mean effect, $\kappa(x)$ is the treatment effect, and $\varepsilon_i \sim N(0, 0.1)$ is independent noise. Based on~\citep{athey2016recursive} we consider three variations of $\phi$ and~$\kappa$, for $f=2, 10, 20$, as shown in Table~\ref{tab:ppg_synthetic_setup}. As can be seen from this table, for $f=2$, the mean effect $\phi$ depends on both variables, and the treatment effect only on one. For $f=10$ and $f=20$, the mean effect depends on a subset of the variables. The treatment effect also depends on a subset of the variables, but only when those values are positive. Moreover, for these two cases, some variables are not related to the potential outcomes at all and function as noise in the dataset.
For the training dataset, per instance, we then select the optimal treatment (the treatment with the maximum value for $Y_i$) as the historical treatment with probability $p \in \{0.1, 0.25, 0.5, 0.75, 0.9 \}$ and store the corresponding outcome in the dataset. We repeat this process five times and generate training sets and test sets with sizes 500 and 10000 respectively.

\begin{table}[t!]
    \centering
    \caption{Mean effect and treatment effect functions for the synthetic dataset, based on~\citep{athey2016recursive}.}
    \begin{tabular}{lcc}
    \toprule
    &
    $\phi(x)$~~ &
    $\kappa(x)$ \\
    \midrule
    
    $f=2$~~ &
    $\displaystyle \frac{1}{2}x_1+x_2$~~ &
    $\displaystyle\frac{1}{2}x_1$ \\ \midrule
    
    $f=10$~~ &
    $\displaystyle \frac{1}{2} \sum_{j=1}^2 x_j + \sum_{j=3}^6 x_j$~~ &
    $\displaystyle \sum_{j=1}^2 \max{(0, x_j)}$ \\ \midrule
    
    $f=20$~~ &
    $\displaystyle \frac{1}{2} \sum_{j=1}^4 x_j + \sum_{j=5}^8 x_j$~~ &
    $\displaystyle \sum_{j=1}^4 \max{(0, x_j)}$ \\
    
    \bottomrule
\end{tabular}
    \label{tab:ppg_synthetic_setup}
\end{table}

We binarize each of the original features into 10 bins with a similar frequency count. The teacher models are then trained on the original data. For IPW we train a decision tree model (dt, computed with CART), logistic regression (log), and the true propensity scores. For DM we train a linear regression (lr) and a lasso regression with $\alpha=0.08$. For DR we use all six combinations of the IPW and DM methods.

\paragraph{Runtime}
The setup above results in 75 scenarios that are solved with 11 different teacher methods. We ran all three methods with a maximum tree depth of 1, 2, and 3, resulting in 2475 problems for each method to solve.

Figure~\ref{fig:synthetic_runtime} shows the runtime performance by listing how many instances of the synthetic dataset can be solved within the specified amount of runtime. Note that this is only the time required to build the tree. It does not include the time to train the teacher model. These results show that \dtc{} is several orders of magnitude faster than Jo-PPG-MIP and also scales much better than CAIPWL-opt while providing exactly the same solutions.
\begin{figure}
    \centering
    \includegraphics[width=0.6\columnwidth]{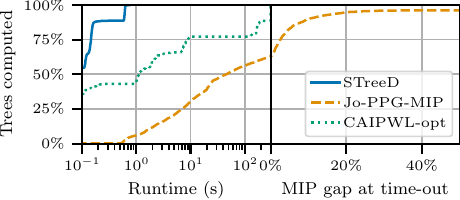}
    \caption{Proportion of instances that can be solved per method for the synthetic datasets within a given runtime (s). Note the logarithmic x-axis for runtime.}
    \label{fig:synthetic_runtime}
\end{figure}

\subsubsection{Warfarin Dosing}

\begin{table*}[t!]
    \centering
    \caption{Results for the Warfarin dataset. The reported runtime is the average of runs that did not result in a timeout (300s). }
    \begin{tabular}{ccccccc}

         \toprule

         &
         \multicolumn{2}{c}{Jo} &
         \multicolumn{2}{c}{CAIPWL-opt} &
         \dtc &
         OOS Correct \\
        \cmidrule(lr){2-3}
        \cmidrule(lr){4-5}
         
         $d$ &
         Timeout &
         Time (s) &
         Timeout &
         Time (s) &
         Time (s) &
         Treatment
         \\ \midrule

        1 &
        - & 
        30 & 
        - & 
        < 1 & 
        < 1 & 
        83.1\% \\ 

        2 &
        62\% & 
        215 & 
        - & 
        5 & 
        < 1 & 
        84.6\% \\ 

        3 &
        100\% & 
        - & 
        - & 
        96 & 
        < 1 & 
        86.6\% \\ 

        4 &
        100\% & 
        - & 
        100\% & 
        - & 
        < 1 & 
        88.0\% \\ 

        5 &
        100\% & 
        - & 
        100\% & 
        - & 
        3 & 
        89.7\% \\ 

        \bottomrule
    \end{tabular}
    \label{tab:warfarin_results}
\end{table*}

Similar to previous studies, we test our method on the Warfarin dosing use-case~\citep{kallus2017prescriptive_trees,bertsimas2019prescriptive, jo2021prescriptive_trees}. Warfarin is an anticoagulant, but the appropriate dose may vary up to a factor 10 among patients. Based on experimental tests, a dataset is compiled which prescribes for 5700 patients a suggested dose~\citep{consortium2009warfarin}. For our experiment, unless otherwise specified, we follow the setup by \citep{jo2021prescriptive_trees}.

We calculate the square root of the suggested weekly dose by the formula presented in the supplementary material section S1e of \citep{consortium2009warfarin} and add noise of $N(0, 0.02)$. This number is then squared and divided by seven to give the suggested daily dose. The features used in their formula are also the features used for the dataset, preprocessed as specified by them. The only three non-binary features (age, weight, and height) are binarized by splitting them into five buckets with approximately the same number of instances. Instances with missing values for weight, height, age, and the therapeutic dose of Warfarin are removed. This leaves 4490 instances.
The daily suggested dose is discretized into three groups, with $k^{opt}=0$ if the dose is less than 3 mg/day, $k^{opt}=1$ if the dose is between 3 and 7 mg/day, and $k^{opt}=2$ if the dose is larger than 7 mg/day.
The observed outcome is $Y_i(k_i) = 1$, if $k_i = k^{opt}_i$; and $Y_i(k_i) = 0$ otherwise.

For simulating historical data, three different setups are tested. In the first, every individual receives a uniformly random treatment $k_i$ in the historical data with the corresponding outcome $Y_i$, to simulate a marginally randomized setting. In the second and third setups, historical treatments are obtained by using the formula for the weekly dose from~\citep{consortium2009warfarin} mentioned above with each coefficient in this formula perturbed by a random amount from $U(-0.06, 0.06)$ and $U(-0.11, 0.11)$ respectively, to change the probability of the historical data having the correct assignment. Each setup is used five times to generate historical treatments. Each of these 15 datasets is then randomly split five times into a training and test set with 75\% and 25\% of the instances respectively. For each of these, a decision tree is trained for IPW scores and a random forest regressor for DM estimates.

\paragraph{Results}
Table~\ref{tab:warfarin_results} shows the results for applying Jo-PPG-MIP and \dtc{} to the Warfarin case. The results are very similar for the different teacher methods and the probability of having the correct assignment in the historical data.\footnote{This is different from the results in~\citep{jo2021prescriptive_trees}. We have corresponded with the authors about this but were not able to reproduce their results. We will include the test setup in our repository to enable reproducibility of the results.} Therefore we just show the average results for trees of depth 1-5.

These results show that \dtc{} is several orders of magnitude faster than both Jo-PPG-MIP and CAIPWL-opt. Because of this, \dtc{} can find optimal trees for larger depth, which for this problem results in a better out-of-sample (OOS) correct treatment rate.

\section{Nonlinear Classification Metrics}
\label{app:multi}

\begin{table*}[t!]
    \centering
     \caption{Runtime (s) for minimizing biobjective misclassification score for two classes, with timeouts beyond 600s denoted with `-'. Best results are shown in bold. Average of five runs.}
     \label{tab:biobj}
    \begin{tabular}{lcccccccc}
         \toprule
         &
         &
         &
         &
         &
         \multicolumn{2}{c}{$d=3$} &
         \multicolumn{2}{c}{$d=4$} \\

        \cmidrule(lr){6-7}
        \cmidrule(lr){8-9}
         
         Dataset &
         $|\mathcal{D}|$ &
         $|\mathcal{D^+}|$ &
         $|\mathcal{D^-}|$ &
         $|\mathcal{F}|$ & 
         MurTree  &
         \dtc &
         MurTree & 
         \dtc \\
         
         \midrule

          Anneal & 
          812 &   
          625 &   
          187 &   
          93 &     
          < 1 &      
          \textbf{< 1} & 
          24 & 
          \textbf{2} \\ 

          Audiology & 
          216 &   
          57 &   
          159 &   
          148 &     
          < 1 &      
          \textbf{< 1} & 
          54 & 
          \textbf{2} \\ 

          Australian-Credit & 
          653 &   
          357 &   
          296 &   
          125 &     
          2 &      
          \textbf{< 1} & 
          139 & 
          \textbf{19} \\ 

          Breast-Wisconsin & 
          683 &   
          444 &   
          239 &   
          120 &     
          < 1 &      
          \textbf{< 1} & 
          61 & 
          \textbf{7} \\ 
         
         COMPAS & 
         6907 &   
         3196 &   
         3711 &   
         12 &     
         2 &      
         \textbf{< 1} & 
         7 & 
         \textbf{< 1} \\ 
          
         Diabetes & 
         768 &   
         500 &   
         268 &   
         112 &     
         3 &      
         \textbf{< 1} & 
         117 & 
         \textbf{13} \\ 
          
         Fico & 
         10459 &   
         5000 &   
         5459 &   
         17 &     
         6 &      
         \textbf{< 1} & 
         26 & 
         \textbf{3} \\ 
          
         German-Credit & 
         1000 &   
         700 &   
         300  &   
         112  &     
         6 &      
         \textbf{< 1} & 
         164 & 
         \textbf{34} \\ 
          
          Heart-Cleveland & 
          296 &   
          160 &   
          136 &   
          95 &     
          < 1 &      
          \textbf{< 1} & 
          43 & 
          \textbf{6} \\ 
         
         Hepatitis & 
         137 &   
         111 &   
         26  &   
         68 &     
         < 1 &      
         \textbf{< 1} & 
         6 & 
         \textbf{< 1} \\ 
          
         Hypothyroid & 
         3247 &   
         2970 &   
         277 &   
         88 &     
         < 1 &      
         \textbf{< 1} & 
         28 & 
         \textbf{3} \\ 
         
         Ionosphere & 
         351 &   
         225 &   
         126 &   
         445 &     
         26 &      
         \textbf{11} & 
         - & 
         - \\ 

         Kr-vs-kp & 
         3196 &   
         1669 &   
         1527 &   
         73 &     
         < 1 &      
         \textbf{< 1} & 
         22 & 
         \textbf{3} \\ 

         Letter & 
         20000 &   
         813 &   
         19187 &   
         224 &     
         46 &      
         \textbf{7} & 
         - & 
         \textbf{469} \\ 

         Lymph & 
         148 &   
         81 &   
         67 &   
         68 &     
         < 1 &      
         \textbf{< 1} & 
         8 & 
         \textbf{1} \\ 

         Mushroom & 
         8124 &   
         4208 &   
         3916 &   
         119 &     
         < 1 &      
         \textbf{< 1} & 
         \textbf{< 1} & 
         < 1 \\ 


         Pendigits & 
         7494 &   
         780 &   
         6714 &   
         216 &     
         13 &      
         \textbf{4} & 
         - & 
         \textbf{240} \\ 

         Primary-Tumor & 
         336 &   
         82 &   
         254 &   
         31 &     
         < 1 &      
         \textbf{< 1} & 
         < 1 & 
         \textbf{< 1} \\ 

         Segment & 
         2310 &   
         330 &   
         1980 &   
         235 &     
         < 1 &      
         < 1 & 
         \textbf{< 1} & 
         < 1 \\ 

         Soybean & 
         630 &   
         92 &   
         538 &   
         50 &     
         < 1 &      
         \textbf{< 1} & 
         3 & 
         \textbf{< 1} \\ 

         Splice-1 & 
         3190 &   
         1655 &   
         1535 &   
         287 &     
         17 &      
         \textbf{9} & 
         - & 
         - \\ 

         Tic-Tac-Toe & 
         958 &   
         626 &   
         332 &   
         27 &     
         < 1 &      
         \textbf{< 1} & 
         1 & 
         \textbf{< 1} \\ 

         Vehicle & 
         846 &   
         218 &   
         628 &   
         252 &     
         5 &      
         \textbf{2} & 
         - & 
         \textbf{130} \\ 

         Vote & 
         435 &   
         267 &   
         168 &   
         48 &     
         < 1 &      
         \textbf{< 1} & 
         3 & 
         \textbf{1} \\ 

         Yeast & 
         1484 &   
         463 &   
         1021 &   
         89 &     
         5 &      
         \textbf{< 1} & 
         68 & 
         \textbf{8} \\ 
         
         \bottomrule
         
    \end{tabular}
\end{table*}

For imbalanced datasets, it is beneficial to use metrics other than accuracy, such as F1-score and Matthews correlation coefficient. However, these metrics are nonlinear and can therefore not (easily) be optimized with state-of-the-art methods for optimal decision trees. To address this, \citet{demirovic2021nonlinear} developed a DP-based approach for binary classification that can optimize any function that is monotonic with respect to the misclassification score for both classes. This class of functions includes F1-score and Matthews-correlation coefficient. 

\dtc{} can also optimize this whole class of functions by using the fact that multiple separable optimization tasks can be combined to form a new separable optimization task, according to Prop.~\ref{prop:multi_sep}.

The next section provides more related work. Then we show experimental results for applying \dtc{} to nonlinear objectives.

\subsection{Related Work}

\citet{lin2020generalized_sparse} also propose a DP method for solving a variety of objectives, such as balanced accuracy and F1-score. However, they observe that F1-score is harder to optimize than other metrics that are monotonic in relation to the number of false positives and negatives because the best assignment in one leaf node is dependent on the assignment in another node. Therefore, they simplify the problem by introducing a parameter $\omega$ that helps determine which label should be assigned in leaf nodes, but as a result, the solution is no longer guaranteed to be optimal. 

\citet{xin2022sparse_rashomon} present a DP method for finding the full Rashomon set of sparse decision trees, which is the set of almost-optimal decision trees (including the optimal). They then show how a Rashomon set for trees optimized for accuracy can be used to find the Rashomon set for F1-score. For this, they provide a bound on the accuracy score such that an Accuracy Rashomon set covers the F1-score Rashomon set. 

\subsection{Experiments}
In our experiments, we evaluate \dtc's performance on finding trees that maximize F1-score. F1-score is the harmonic mean of precision and recall and can be calculated as follows based on the number of true positives ($tp$), false positives ($fp$), and false negatives ($fn$): $tp/(tp + 0.5(fp+fn))$.

\paragraph{Setup}
We compare \dtc{} with the MurTree algorithm from \citep{demirovic2021nonlinear} on a set of binarized datasets with binary class as described in Table~\ref{tab:biobj}. The datasets are originally from the UCI repository \citep{uci2017} and from \citep{angwin2016compas_propublica_manual, fico_manual, dataset1990cancer_diagnosis, dataset_heart_cleveland, dataset_lymph, dataset_primary_tumor, dataset_statlog_vehicle}. We use the binarized datasets from the MurTree repository.\footnote{\url{https://bitbucket.org/EmirD/MurTree-bi-objective}} For each dataset, both algorithms need to find a tree of depth three and four that optimizes F1-score. We repeat the tests five times and report the average runtime.



\paragraph{Results}
Table~\ref{tab:biobj} shows how \dtc{} compares to MurTree on optimizing nonlinear metrics. By using the geometric mean of the relative performance \citep{fleming1986geometric_mean}, we can see that for trees of depth three, \dtc{} is on average 7.1 times faster than MurTree; for trees of depth four, it is on average 6.8 times faster. 
This performance difference can be partly explained by the new upper and lower bound technique, as explained in Appendix~\ref{app:pseudocode}.



This same method could also be used to optimize other nonlinear metrics that are monotonic with respect to the false positive and negative scores, such as Matthews correlation coefficient.

\section{Group Fairness}
\label{app:group_fair}
In many cases, removing sensitive features is not sufficient to prevent discrimination \citep{pedreschi2008dm_discrimination, dwork2012fairness_awareness}, and therefore fairness constraints are imposed on ML models. We here consider two versions of \emph{group fairness}: demographic parity and equality of opportunity \citep{caton2020ml_fairness_survey, mehrabi2021ml_fairness_survey}. We leave individual fairness notions as future work.

Group fairness metrics impose equality on groups defined by some sensitive feature. Demographic parity requires equal probability of a positive result for each group, and equality of opportunity requires equal true positive rate for each group. Both of these can be optimized using \dtc{} by adding the threshold constraints as defined in Eq.~\eqref{eq:dempar_thres} for demographic parity and Eq.~\eqref{eq:eoo_thres1} for equality of opportunity, as introduced below.

The next section provides more related work on incorporating fairness constraints in decision tree search. We then derive how threshold constraints can be used to model fairness constraints in \dtc. The final section shows experimental results for imposing fairness constraints in \dtc.

\subsection{Related Work}
Fairness in decision trees was first considered by \citet{kamiran2009classifying} who proposed to preprocess the data to guide the decision tree training process to less discriminating trees. Later, \citet{kamiran2010dt_discrimination} proposed new splitting criteria to enable top-down induction using a mix of information gain in the outcome label and the sensitive feature. However, they concluded that this was not sufficient and instead proposed after-the-fact relabeling to get good results. 

Fairness constraints were first considered in MIP models for optimal decision trees by \citet{verwer2017flexible}. \citet{aghaei2019nondiscriminative} later developed a MIP model for optimal trees for either a group fairness or individual fairness constraint. \citet{jo2022fair_mip} propose a new MIP model based on a max-flow formulation that scales better than previous models. 

These MIP models, however, still do not scale well beyond small datasets and small tree depths. Therefore \citet{wamg2022iterative_fair_dt} propose to break down the problem and use MIP as a look-ahead procedure that decides on the best feature to split on, but does not solve the whole problem as a monolith. Their experiments show better scalability, but their method is no longer guaranteed to return optimal solutions.

\citet{linden2022fair_dt_dp}, on the other hand, propose an optimal DP method that optimizes a global fairness constraint by considering upper and lower bounds on the final discrimination value for subtrees to enable early comparison and pruning. Their results show order-of-magnitude improvements on runtime, and their method can handle much larger datasets than both the MIP methods and the iterative approach. 

\subsection{Optimization Task Formulation}
In this section we derive threshold constraints for demographic parity and equality of opportunity.

\paragraph{Demographic parity}
As stated in the main text, demographic parity means that the probability of the positive outcome is the same for people of different groups:
\begin{equation}
\label{eq:dem_par_probs}
    P(\hat{y}=1 ~|~ a=1) = P(\hat{y}=1 ~|~ a=0)
\end{equation}
Typically, this is simplified to limiting the difference between the probabilities to some small percentage $\delta$. Let $\hat{N}(y,a)$ be the number of group $a$ receiving the positive outcome, and  $\hat{N}(\bar{y},a)$ the number of group $a$ receiving the negative outcome, etc. Then Eq.~\eqref{eq:dem_par_probs} becomes:
\begin{equation}
\label{eq:dem_par_const}
    \left|\frac{\hat{N}(y,a)}{N(a)} - \frac{\hat{N}(y,\bar{a})}{N(\bar{a})}\right| \leq \delta
\end{equation}

By observing that $\hat{N}(\bar{y},a)/N(a) = 1 - \hat{N}(y,a)/N(a)$, and $\hat{N}(\bar{y},\bar{a})/N(\bar{a}) = 1 - \hat{N}(y,\bar{a})/N(\bar{a})$, Eq.~\eqref{eq:dem_par_const} can be rewritten to the two constraints shown in the main text:
\begingroup
\allowdisplaybreaks
\begin{align}
    \frac{\hat{N}(y,a)}{N(a)} + \frac{\hat{N}(\bar{y},\bar{a})}{N(\bar{a})} &\leq 1 + \delta &
    \frac{\hat{N}(y,\bar{a})}{N(\bar{a})} + \frac{\hat{N}(\bar{y},a)}{N(a)} &\leq 1 + \delta
\end{align}
\endgroup

\paragraph{Equality of opportunity} Similarly, we can optimize for other group concepts of fairness, such as \emph{equality of opportunity}, which is satisfied when:
\begin{equation}
    \label{eq:eq_opp}
    P(\hat{y} = 1 ~|~ y=1, a=1) = P(\hat{y}=1 ~|~ y=1, a=0)
\end{equation}

This can be computed by two separable threshold constraints:
\begingroup
\allowdisplaybreaks
\begin{align}
\label{eq:eoo_thres1}
g_a(\mathcal{D}, \hat{k}) &= 
\begin{cases}
\frac{|\mathcal{D}_{y,a}|}{N(y, a)} & \hat{k} = 1 \\
\frac{|\mathcal{D}_{y,\bar{a}}|}{N(y, \bar{a})} & \hat{k} = 0
\end{cases} &
g_{\bar{a}}(\mathcal{D}, \hat{k}) &= 
\begin{cases}
\frac{|\mathcal{D}_{y,a}|}{N(y, a)} & \hat{k} = 0 \\
\frac{|\mathcal{D}_{y,\bar{a}}|}{N(y, \bar{a})} & \hat{k} = 1
\end{cases}
\end{align}
\endgroup
In this equation $\mathcal{D}_{y,a}$ and $\mathcal{D}_{y,\bar{a}}$ are the number of instances with a positive outcome in the dataset for group $a$ and $\bar{a}$, and $N(y, a)$ and $N(y, \bar{a})$ represent the number of group $a$ and $\bar{a}$ that have the positive outcome in the data.

\begin{table*}[t!]
\setlength\tabcolsep{6pt}
    \centering
    \caption{Runtime (s) results for optimizing with a \emph{demographic parity} constraint. Timeouts (>600s) are marked with '-'.}
    \label{tab:groupfair}
    \begin{tabular}{lcccccccccc}
         \toprule

        
        &&&
        \multicolumn{2}{c}{Aghaei \citep{aghaei2019nondiscriminative}} &
        \multicolumn{2}{c}{Jo \citep{jo2022fair_mip}} &
        \multicolumn{2}{c}{DPF \citep{linden2022fair_dt_dp}} &
        \multicolumn{2}{c}{\dtc} 
        \\

        \cmidrule(lr){4-5}
        \cmidrule(lr){6-7}
        \cmidrule(lr){8-9}
        \cmidrule(lr){10-11}
         
         Dataset &  
         $|\mathcal{D}|$ & 
         $|\mathcal{F}|$ & 
         $d$=$2$ &
         $d$=$3$ &
         $d$=$2$ &
         $d$=$3$ &
         $d$=$2$ &
         $d$=$3$ &
         $d$=$2$ &
         $d$=$3$ 
         \\ \midrule

         Adult &
         45222 &
         17 & 
         - & 
         - & 
         - & 
         - & 
         < 1 & 
         < 1 & 
         < 1 & 
         2 
         \\
         
         Bank &
         45211 &
         46 & 
         - & 
         - & 
         - & 
         - & 
         < 1 & 
         3 & 
         < 1 & 
         4 
         \\
         
         Com.\&Crime &
         1994 &
         97 & 
         - & 
         - & 
         - & 
         - & 
         < 1 & 
         5 & 
         < 1 & 
         29 
         \\
         
         COMPAS r. &
         6172 &
         9 & 
         - & 
         - & 
         - & 
         - & 
         < 1 & 
         < 1 & 
         < 1 & 
         < 1 
         \\
         
         COMPAS v.r. &
         4020 &
         9 & 
         - & 
         - & 
         - & 
         - & 
         < 1 & 
         < 1 & 
         < 1 & 
         < 1 
         \\
         
         Dutch &
         60420 &
         58 & 
         - & 
         - & 
         - & 
         - & 
         < 1 & 
         6 & 
         < 1 & 
         8 
         \\
         
         German &
         1000 &
         69 & 
         - & 
         - & 
         - & 
         - & 
         < 1 & 
         2 & 
         < 1 & 
         62 
         \\ 
         
         KDD &
         284556 &
         117 & 
         - & 
         - & 
         - & 
         - & 
         1 & 
         39 & 
         3 & 
         26 
         \\ 
         
         OULAD &
         21562 &
         45 & 
         - & 
         - & 
         - & 
         - & 
         < 1 & 
         3 & 
         < 1 & 
         19 
         \\
         
         Ricci &
         118 &
         4 & 
         3 & 
         44  & 
         2 & 
         13 & 
         < 1 & 
         < 1 & 
         < 1 & 
         < 1
         \\
         
         Stud. Math &
         395 &
         55 & 
         - & 
         - & 
         473 & 
         - & 
         < 1 & 
         < 1 & 
         < 1 & 
         2
         \\
         
         Stud. Port. &
         649 &
         55 & 
         - & 
         - & 
         - & 
         - & 
         < 1 & 
         < 1 & 
         < 1 & 
         3 
         \\ 
         \bottomrule
    \end{tabular}
\end{table*}

\begin{table*}[t!]
\setlength\tabcolsep{5pt}
    \centering
    \caption{Runtime (s) results for optimizing with an \emph{equality of opportunity} constraint. Timeouts (>600s) are marked with '-'.}
    \label{tab:eq_opp}
    \begin{tabular}{lcccc}
         \toprule

        
        &&&
        \multicolumn{2}{c}{\dtc}
        \\

       \cmidrule(lr){4-5}
         
         Dataset &  
         $|\mathcal{D}|$ & 
         $|\mathcal{F}|$ & 
         $d=2$ &
         $d=3$ 
         \\ \midrule

         Adult &
         45222 &
         17 & 
         < 1 & 
         1 
         \\
         
         Bank &
         45211 &
         46 & 
         < 1 & 
         7  
         \\
         
         Com.\&Crime &
         1994 &
         97 & 
         < 1 & 
         11  
         \\
         
         COMPAS r. &
         6172 &
         9 & 
         < 1 & 
         < 1  
         \\
         
         COMPAS v.r. &
         4020 &
         9 & 
         < 1 & 
         < 1  
         \\
         
         Dutch &
         60420 &
         58 & 
         < 1 & 
         7  
         \\
         
         German &
         1000 &
         69 & 
         < 1 & 
         50  
         \\ 
         
         KDD &
         284556 &
         117 & 
         3 & 
         37 
         \\ 
         
         OULAD &
         21562 &
         45 & 
         < 1 & 
         19  
         \\
         
         Ricci &
         118 &
         4 & 
         < 1 & 
         < 1 
         \\
         
         Stud. Math &
         395 &
         55 & 
         < 1 & 
         < 1  
         \\
         
         Stud. Port. &
         649 &
         55 & 
         < 1 & 
         1  
         \\ 
         \bottomrule
    \end{tabular}
\end{table*}

\subsection{Experiments}
In our experiments we compare \dtc{} with the two MIP methods Aghaei-Fair-MIP \citep{aghaei2019nondiscriminative} and Jo-Fair-MIP \citep{jo2022fair_mip}; and the DP method DPF \citep{linden2022fair_dt_dp}. The comparison focuses on scalability.

\paragraph{Setup} We follow the setup from \citep{linden2022fair_dt_dp}, by testing each method on 12 datasets, preprocessed and binarized as described in \citep{quy2021fair_datasets}. Categorical variables are binarized using one-hot encoding. Categorical variables with twenty or more categories are removed. For each of these datasets, the task is to find an optimal decision tree of depth 2-3 with at most 1\% discrimination. A timeout of 600 seconds is used. Tests are repeated five times and we report averages.
For all methods, we show results for enforcing a demographic parity constraint. For \dtc, we also show runtime results for optimizing with an equality-of-opportunity constraint.

References to the original datasets can be found here \citep{dutch_census2001_manual, cortez2008student_datasets, strack2014diabetes, moro2014bank_marketing, angwin2016compas_propublica_manual, uci2017, kuzilek2017oulad}.

\paragraph{Results} Table~\ref{tab:groupfair} shows the runtime results. Both MIP methods almost always hit the time limit, except for the smallest datasets. In contrast, both DP methods find optimal decision trees for depth two in less than one second except for the KDD-Census income dataset. For depth three DPF outperforms \dtc. This is partly because of how the fairness constraint is formulated for \dtc: as two threshold constraints. For better performance, an objective more like the one presented in DPF could be formulated for \dtc. However, formulating and implementing a fairness constraint in \dtc{} is much easier than developing a specialized method such as DPF. For equality of opportunity, \dtc{} shows similar results as when optimizing demographic parity. 

In summary, \dtc{} shows performance not too far off from an existing specialized DP method; allows for easy formulation of new types of constraints, and outperforms both MIP methods by a large margin.
\section{Classification Accuracy}
\label{app:acc}
\begin{table*}[t]
    \centering
     \caption{Runtime (s) for maximizing accuracy, with timeouts beyond 600s denoted with `-'. Best results are shown in bold. Average of five runs. Datasets for which all methods finish within one second are left out.}
     \label{tab:acc}
    \begin{tabular}{lcccccccc}
         \toprule
         &
         &
         &
         \multicolumn{3}{c}{$d=4$} &
         \multicolumn{3}{c}{$d=5$} \\

        \cmidrule(lr){4-6}
        \cmidrule(lr){7-9}
         
         Dataset &
         $|\mathcal{D}|$ &
         $|\mathcal{F}|$ & 
         DL8.5 &
         MurTree  &
         \dtc &
         DL8.5 &
         MurTree & 
         \dtc \\
         
         \midrule

          Anneal & 
          812 &   
          93 &     
          2 & 
          \textbf{< 1} & 
          < 1 & 
          17 & 
          \textbf{2} & 
          5 \\ 

          Audiology & 
          216 &   
          148 &     
          4 & 
          < 1 & 
          \textbf{< 1} & 
          < 1 & 
          \textbf{< 1} & 
          \textbf{< 1} \\ 

          Australian-Credit & 
          653 &   
          125 &     
          23 & 
          \textbf{1} & 
          2 & 
          - & 
          \textbf{21} & 
          37 \\ 

          Breast-Wisconsin & 
          683 &   
          120 &     
          4 & 
          \textbf{< 1} & 
          1 & 
          15 & 
          1 & 
          7 \\ 
         
          
         Diabetes & 
         768 &   
         112 &     
         24 & 
         \textbf{1} & 
         2 & 
         - & 
         \textbf{22} & 
         34 \\ 
          
         Fico & 
         10459 &   
         17 &     
         \textbf{< 1}  & 
         < 1 & 
         < 1 & 
         \textbf{< 1} & 
         1 & 
         2 \\ 
          
         German-Credit & 
         1000 &   
         112  &     
         36 & 
         \textbf{2} & 
         3 & 
         - & 
         61 & 
         94 \\ 
          
          Heart-Cleveland & 
          296 &   
          95 &     
          6 & 
          \textbf{< 1} & 
          < 1 & 
          80 & 
          \textbf{3} & 
          7 \\ 
         
         Hepatitis & 
         137 &   
         68 &     
         < 1 & 
         \textbf{< 1} & 
         < 1 & 
         2 & 
         < 1 & 
         \textbf{< 1} \\ 
          
         Hypothyroid & 
         3247 &   
         88 &     
         5 & 
         1 & 
         \textbf{< 1} & 
         94 & 
         13 & 
         \textbf{11} \\ 
         
         Ionosphere & 
         351 &   
         445 &     
         - & 
         \textbf{69} & 
         152 & 
         - & 
         146 & 
         \textbf{112} \\ 

         Kr-vs-kp & 
         3196 &   
         73 &     
         2 & 
         < 1 & 
         \textbf{< 1} & 
         18 & 
         6 & 
         \textbf{4} \\ 

         Letter & 
         20000 &   
         224 &     
         547 & 
         200 & 
         \textbf{145} & 
         - & 
         - & 
         - \\ 

         Lymph & 
         148 &   
         68 &     
         1  & 
         \textbf{< 1} & 
         < 1 & 
         < 1 & 
         < 1 & 
         \textbf{< 1} \\ 

         Mushroom & 
         8124 &   
         119 &     
         3 & 
         < 1 & 
         \textbf{< 1} & 
         1 & 
         < 1 & 
         \textbf{< 1} \\ 

         Pendigits & 
         7494 &   
         216 &     
         160 & 
         60 & 
         74 & 
         - & 
         240  & 
         \textbf{109} \\ 



         Soybean & 
         630 &   
         50 &     
         < 1 & 
         \textbf{< 1} & 
         < 1  & 
         6 & 
         \textbf{< 1}  & 
         1 \\ 

         Splice-1 & 
         3190 &   
         287 &     
         - & 
         \textbf{148}  & 
         156 & 
         - & 
         - & 
         - \\ 

         Tic-Tac-Toe & 
         958 &   
         27 &     
         < 1  & 
         \textbf{< 1} & 
         < 1 & 
         2 & 
         \textbf{< 1} & 
         < 1 \\ 

         Vehicle & 
         846 &   
         252 &     
         79 & 
         8 & 
         15 & 
         - & 
         \textbf{137} & 
         343 \\ 

         Vote & 
         435 &   
         48 &     
         < 1 & 
         \textbf{< 1} & 
         < 1 & 
         5 & 
         \textbf{< 1} & 
         1 \\ 

         Yeast & 
         1484 &   
         89 &     
         10 & 
         \textbf{< 1} & 
         1 & 
         280 & 
         \textbf{13} & 
         18 \\ 

        \midrule

        \multicolumn{3}{l}{Average rank} &
        3.0 & 
        \textbf{1.29} & 
        1.71 & 
        2.89 & 
        \textbf{1.39} & 
        1.72 \\ 
         
        \bottomrule
         
    \end{tabular}
\end{table*}

The most studied use case for optimal decision trees is maximizing accuracy. Since \dtc{} generalizes previous dynamic programming approaches for optimal decision trees, here we evaluate how \dtc{} performs compared to two state-of-the-art dynamic programming approaches to optimal decision trees: DL8.5 and MurTree.

\begin{description}
\item[DL8.5\textnormal{~\cite{aglin2020learning, aglin2020pydl8}}] is a dynamic programming approach that improves on DL8 \citep{nijssen2007mining, nijssen2010dl8_constraints} by adding branch-and-bound and new caching techniques. We use their publicly available Python package.\footnote{\url{https://github.com/aia-uclouvain/pydl8.5}} Note that we compared to their latest available version, which incorporates ideas from MurTree \citep{demirovic2022murtree}.

\item[MurTree\textnormal{~\cite{demirovic2022murtree}}] is also a dynamic programming approach for optimal decision trees. Like DL8.5 it incorporates branch-and-bound and caching. Other improvements are a similarity lower bound and a special depth-two solver. We use the publicly available Python package.\footnote{\url{https://github.com/MurTree/pymurtree}}
\end{description}

We test both methods and \dtc{} on a set of binarized datasets with binary class as described in Table~\ref{tab:acc}. These are the same datasets as used in Appendix~\ref{app:multi}.\footnote{\url{https://bitbucket.org/EmirD/MurTree-bi-objective}} For each dataset, all three algorithms need to find a tree of depth four or five with minimum misclassification score. We repeat the tests five times and report the average runtime in Table~\ref{tab:acc}. Runtime results are also shown in Fig.~\ref{fig:acc_runtime}.

\begin{figure}[t!]
    \centering
    \includegraphics[width=0.6\columnwidth]{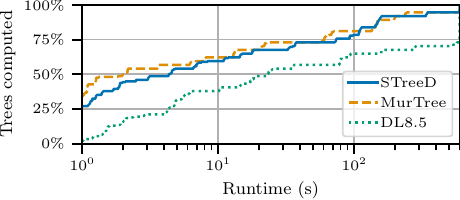}
    \caption{Runtime (s) comparison. A timeout of 300 seconds is used. Note the logarithmic x-axis.}
    \label{fig:acc_runtime}
\end{figure}

By using the geometric mean of the relative performance \citep{fleming1986geometric_mean}, we can see that MurTree on average is 1.25 times faster than \dtc{} and \dtc{} on average is 6.4 times faster than DL8.5. According to a Wilcoxon signed rank test with a significance level 5\%, both results are significant.

Since \dtc{} outperforms DL8.5 and DL8.5 outperforms DL8, we can conclude that \dtc{} also outperforms DL8 in terms of scalability. Furthermore, \citet{demirovic2022murtree} compared MurTree with GOSDT \citep{lin2020generalized_sparse} and observed that GOSDT resulted in a timeout for 65\% of the 68 datasets when computing optimal trees with maximum depth four, whereas MurTree did not result in a single timeout. Since \dtc{}'s performance is only a factor of 1.25 slower than MurTree, we conclude that \dtc{} also outperforms GOSDT.


\end{document}